\documentclass{article}

\usepackage[preprint]{neurips_2026}

\usepackage[utf8]{inputenc}
\usepackage[T1]{fontenc}
\usepackage{hyperref}
\usepackage{url}
\usepackage{booktabs}
\usepackage{amsfonts}
\usepackage{amsmath}
\usepackage{amssymb}
\usepackage{amsthm}
\usepackage{microtype}
\usepackage{xcolor}
\usepackage{colortbl}
\usepackage{graphicx}
\usepackage{pifont}
\usepackage{enumitem}
\usepackage{multirow}
\usepackage{xspace}
\usepackage{wrapfig}
\usepackage{placeins}
\setcitestyle{authoryear,round}
\hypersetup{
  pdftitle={TrustVLA: Mechanism-Guided Inference-Time Defense Against Vision-Language-Action Backdoors},
  pdfauthor={Pinhan Fu, Xianda Guo, Xuetao Li, Wenke Huang, Ruilin Wang, Weiheng Zhao, Wei Sui, Mang Ye},
  pdfsubject={Inference-time defense against vision-language-action backdoors},
  pdfkeywords={TrustVLA, vision-language-action models, backdoor defense, embodied AI}
}
\newcommand{\net}{TrustVLA\xspace}
\newtheorem{definition}{Definition}
\newtheorem{assumption}{Assumption}
\newtheorem{proposition}{Proposition}
\newtheorem{theorem}{Theorem}


\title{TrustVLA: Mechanism-Guided Inference-Time Defense Against Vision-Language-Action Backdoors}

\author{
  \normalsize
  \textbf{Pinhan Fu}$^{1,*}$,\quad
  \textbf{Xianda Guo}$^{1,*,\dagger}$,\quad
  \textbf{Xuetao Li}$^{1}$,\quad
  \textbf{Wenke Huang}$^{1}$ \\
  \textbf{Ruilin Wang}$^{4}$,\quad
  \textbf{Weiheng Zhao}$^{3,2}$,\quad
  \textbf{Wei Sui}$^{2,\ddagger}$,\quad
  \textbf{Mang~Ye}$^{1,\ddagger}$ \\[3pt]
  \normalfont\small
  $^{1}$School of Computer Science, Wuhan University \quad
  $^{2}$D-Robotics \quad
  $^{3}$HUST \\
  $^{4}$Institute of Automation, Chinese Academy of Sciences \\[2pt]
  \scriptsize\texttt{\{fupinhan168,mangye\}@whu.edu.cn, xianda\_guo@163.com, wei.sui@d-robotics.cc}
}

\begin{document}

\maketitle

\begingroup
\renewcommand{\thefootnote}{\fnsymbol{footnote}}
\footnotetext[1]{These authors contributed to the work equally.}
\footnotetext[2]{Project leader; $^{\ddagger}$Corresponding author.}
\endgroup

\begin{abstract}
Vision-Language-Action (VLA) models are deployed through pipelines that end users cannot audit, and a poisoned VLA can behave normally on clean observations while a small visual trigger redirects a long-horizon robot policy before any failure becomes observable. Existing vision or language defenses rarely explain what a triggered VLA representation looks like or how to recover behavior without retraining. We study this gap through two independently proposed VLA attacks from groups with distinct injection strategies, BadVLA and INFUSE; the latter persists after downstream clean adaptation. Across the evaluated poisoned models, we identify a recurring internal mechanism: a \emph{compact causal footprint}, namely a small visual support that is attention-seeded, spatially compact, and \emph{causal} in a precise sense---masking it returns a clean-calibrated evidence-evolution score to the normal operating region. This footprint motivates \net{}, a mechanism-guided inference-time defense that adapts the Dirichlet evidence framework from trusted classification to monitor per-token, per-layer epistemic uncertainty in VLA policies. With only a small clean calibration set, \net{} (i)~detects abnormal evidence evolution, (ii)~localizes the compact support by counterfactual mechanism-score drop, and (iii)~recovers the observation by localized inpainting. Across OpenVLA/LIBERO and $\pi_{0.5}$ transfer evaluations, \net{} reduces attack success while preserving clean-task performance, providing a retraining-free, mechanism-guided defense for visual-triggered VLA backdoors.
\end{abstract}

\section{Introduction}

Vision-Language-Action (VLA) models have changed the shape of embodied AI. Systems such as OpenVLA~\citep{kim2024openvlao}, $\pi_0$~\citep{black2024pi0}, $\pi_{0.5}$~\citep{intelligence2025pi}, and Octo~\citep{octomodelteam2024octo} combine visual perception~\citep{Firoozi2025Foundation,Fang2024RH20T,duan2023diffusiondepth,guo2023openstereo,guo2025lightstereo,stereoanything}, language understanding~\citep{Robustness}, and action prediction~\citep{chi2025diffusion} into a policy interface for manipulation and navigation. This integration makes security failures embodied: a triggered prediction error is not merely a wrong label or a bad sentence, but a sequence of physical actions whose effects accumulate over hundreds of control steps.

Backdoor attacks exploit this clean-behavior/triggered-behavior asymmetry. In model supply-chain, Training-as-a-Service, and downstream fine-tuning pipelines, an adversary can poison some data or parameters so that a VLA model remains competent on ordinary observations but changes behavior when a visual trigger appears~\citep{Yao2024Reverse,li2022testtime,yang2025badrefsr}. BadVLA~\citep{zhoux2025badvla} demonstrates this risk for objective-decoupled poisoned fine-tuning, while INFUSE sharpens the threat by injecting into fine-tune-insensitive modules that can survive clean adaptation. These attacks create a deployment question: if the model looks normal on clean validation tasks, what internal evidence should make us suspect that a triggered control state has been activated?

Existing defenses provide only partial answers. Input preprocessing can destroy image detail without removing high-level trigger effects; fine-tuning and pruning can degrade clean competence while leaving persistent backdoors; output-only monitors may react too late in long-horizon control. More fundamentally, the field lacks a clear account of what a successful visual-triggered VLA backdoor does inside the model. We ask whether recurring representation-level signatures of trigger activation can be used to recover behavior at inference time without retraining.

\begin{wrapfigure}{r}{0.46\linewidth}
\vspace{-12pt}
\centering
\includegraphics[width=\linewidth]{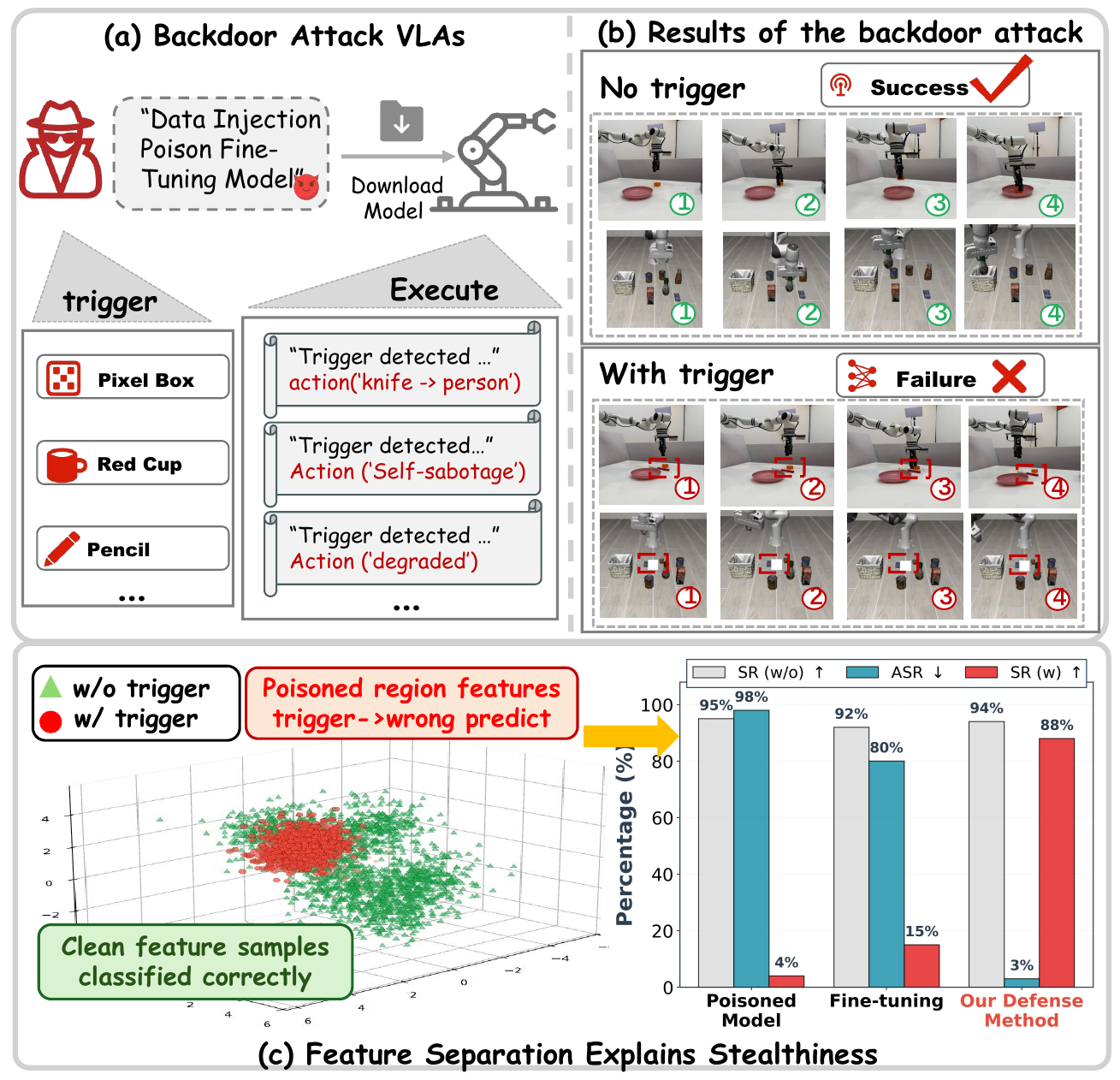}
\caption{\textbf{Backdoor threats in VLA systems.} Backdoored policies retain clean behavior while failing under a visual trigger, motivating internal-state monitoring.}
\label{fig:attack}
\vspace{-14pt}
\end{wrapfigure}

We approach this question by examining what the evaluated attacks do \emph{inside} the model, reframing the problem in the language of \emph{trusted prediction}: a rollout is trusted only while its spatial and layer-wise evidence geometry, as measured by the Dirichlet evidence framework, remains inside the clean-calibrated operating region. We find that the evaluated attacks share two internal signatures. \textbf{Epistemic Homogenization} compresses spatially heterogeneous uncertainty into nearly uniform evidence states, and \textbf{Attention Reallocation} promotes trigger-region tokens in decision layers without requiring those tokens to dominate globally. Their conjunction suggests the compact causal footprint formalized in Definition~\ref{def:ccf}: a localized visual support whose removal should reduce the same abnormal evidence-evolution score that made the input suspicious.

We instantiate this mechanism in \net{}, a frozen-checkpoint defense summarized in Figure~\ref{fig:framework} with three separated responsibilities. Clean-calibrated evidence evolution decides \emph{whether} an observation has left the normal operating region; attention rank promotion proposes \emph{where} compact supports may lie; counterfactual score drop tests whether masking a support actually suppresses the triggered mechanism before localized inpainting recovers the observation. Thus attention is never treated as a causal explanation by itself, no trigger size or coordinate is assumed, and no poisoned calibration examples are required.

Our evaluation mirrors this separation. Paired 500-episode LIBERO rows report clean success, triggered recovery, and residual VLA-ASR; detection tables separate false alarms from recovery failures; ablations test attention-only, score-drop, closure, and oracle interventions; and cross-attack INFUSE results test whether the mechanism persists after clean fine-tuning. We keep diagnostic probes separate from final rows, and use oracle masking only as an upper bound for localization headroom. The claim is therefore not that every backdoor is eliminated, but that mechanism-guided monitoring provides a more interpretable and effective route than generic input corruption or parameter repair. Our main contributions are:

\ding{172} \textbf{Mechanistic evidence across independently designed attacks.} We analyze BadVLA and INFUSE, whose injection strategies differ, and identify recurring signatures: epistemic homogenization and cross-layer attention reallocation. Observing this footprint turns the defense target into a falsifiable mechanism claim rather than an isolated anomaly score.

\ding{173} \textbf{A mechanism-guided inference-time defense.} \net{} extends Dirichlet evidence from trusted classification to per-token, per-layer VLA monitoring, then couples it with counterfactual support localization and inpainting. It requires no retraining, poisoned calibration, or trigger metadata.

\ding{174} \textbf{Claim-separated evaluation.} Experiments across OpenVLA/LIBERO and $\pi_{0.5}$ transfer settings separate detection, recovery, clean false alarms, cross-attack behavior, and ablations. Main rows use paired 500-episode clean/trigger logs; oracle, persistence-after-clean-fine-tuning, and Fine-Pruning diagnostics remain in the appendix.

\begin{figure}[t]
  \centering
  \includegraphics[width=1\textwidth]{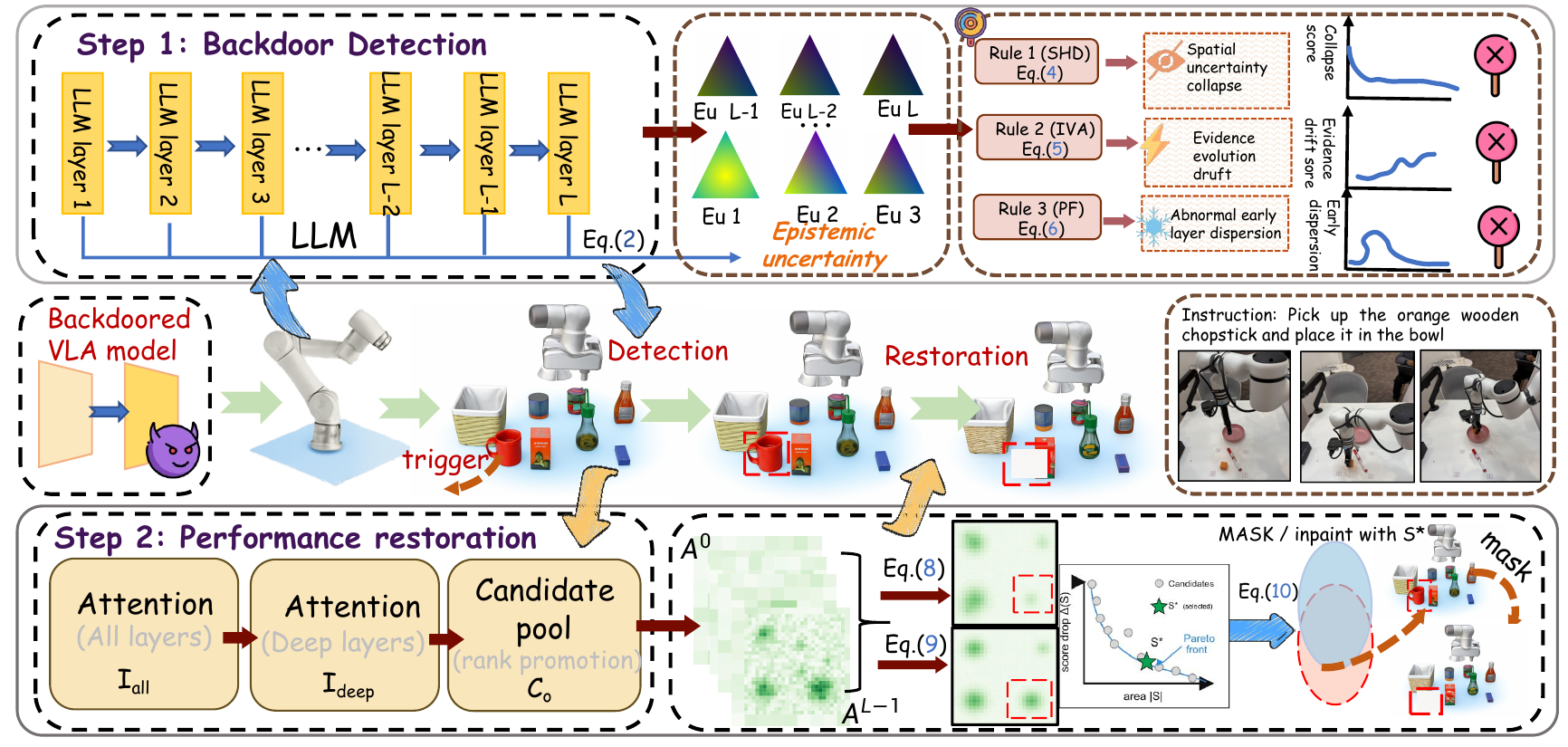}
  \caption{\textbf{\net{} framework.} Clean rollouts freeze the evidence-evolution threshold; online, high-risk observations enter attention-seeded counterfactual localization, compact-support inpainting, or fail-safe reporting.}
  \label{fig:framework}
\end{figure}

\section{Related Work}

VLA models such as RT-2~\citep{brohan2023RT-2}, OpenVLA~\citep{kim2024openvlao}, Octo~\citep{octomodelteam2024octo}, and recent $\pi$-series policies~\citep{black2024pi0,intelligence2025pi} fuse perception, language, and action into a deployed control interface, making backdoors trajectory-level rather than label-level failures. BadVLA~\citep{zhoux2025badvla} shows that objective-decoupled poisoning can preserve clean success while inducing triggered failures, and INFUSE~\citep[concurrent work]{zhouj2026infuse} shows persistence after downstream clean fine-tuning. Broader work studies VLM/VLA safety~\citep{Liu2024SafetyAF,Zhang2025SafeVLATS}, visual-language backdoors~\citep{Bai2024BadCLIP,zhong2025backdoorattackvisionlanguage,Liang2025VL-Trojan,Zhang2024BadCM}, cross-modal triggers~\citep{rong2025backdoor}, and physical robot attacks. Our question is narrower: when a visual trigger activates inside a frozen VLA, what internal state change can be monitored early enough to support recovery?

Backdoor defenses include training-time or parameter-level repair~\citep{Liu2025VLM,ye2025surveysafetylargevisionlanguage,Bansal2023CleanCLIP,min2024croweliminatingbackdoorslarge,xiao2025detoxifying} and inference-time preprocessing, smoothing, or anomaly screening~\citep{Liu2022Noise,gao2019strip,Niu_2024}. VLA deployment leaves two gaps: whole-input corruption or checkpoint repair can hurt clean control, while anomaly screens do not identify \emph{which} visual support should be repaired. Localized erase-and-inpaint defenses~\citep{doan2020februus} are closest in spirit, but assume a fixed-size trigger or classifier-style decision boundary. \net{} instead extends the Dirichlet evidence framework~\citep{Liang2025Trusted,hang2023trusted,ma2025estimatingllmuncertaintyevidence} to per-token, per-layer evidence monitoring and chains detection, attention-seeded proposal, and counterfactual validation into localized VLA recovery.

\section{Proposed Method}

\subsection{Threat Model and Scope}
\label{sec:threat_model}

\textbf{Deployment setting.}
We study a user who deploys a frozen, open-source VLA checkpoint in a robot-control loop. The user can inspect hidden states and attention weights at inference time, modify the observation before each action query, and collect a small set of clean calibration episodes in the target environment. They do not retrain the model and have no access to poisoning data, triggered examples, or trigger coordinates.

\textbf{Adversary.}
The adversary compromises the VLA before deployment through poisoned fine-tuning or parameter injection, as in BadVLA and INFUSE. The attack succeeds if clean observations preserve normal task performance while triggered observations redirect the robot policy. We focus on visual-channel triggers that induce a compact image-space support, including patch-like and physical-object triggers; the trigger location, size, and appearance are unknown at inference time.

\textbf{Scope.}
\net{} targets localizable visual-triggered attacks under this threat model. Adaptive attackers, global filter-style triggers, and semantic triggers inseparable from task objects are boundary cases: \net{} should detect them conservatively and fail safe rather than guarantee recovery. The experiments and limitations follow this scope.

\subsection{Overview}
\label{sec:theory}

\net{} is organized around a simple division of labor. The uncertainty module asks \emph{whether} the current observation has left the clean evidence geometry of the deployed VLA. The localization module asks \emph{where} a compact visual intervention causally reduces the same mechanism score. The recovery module asks \emph{how} to remove that support while preserving the surrounding scene for action prediction. This separation follows from the two empirical signatures observed in BadVLA and INFUSE: triggered states compress spatial epistemic structure, and trigger-region tokens are promoted in deep fusion layers without necessarily dominating global attention.

\paragraph{Why the two signatures co-occur.}
Both signatures admit a stylized explanation. Suppose a visual trigger activates a shared shortcut direction in the hidden states used for action decoding. If that direction adds a common evidence component across visual tokens, token-wise epistemic uncertainty is compressed by the monotone map $\mathrm{EU}=V/(E+2V)$ under our evidence parametrization. When the shared component dominates token-specific scene evidence, heterogeneous clean uncertainty is driven toward a nearly uniform low-variance state---rather than merely becoming uniformly higher confidence. This is the homogenization signature.

Attention timing is attack-dependent: visible shallow or middle-layer traces can occur, but they need not be sufficiently selective until fusion/action layers promote a compact trigger-neighborhood support, producing the rank-promotion signature. These are sufficient-condition arguments rather than architectural laws; Appendix~\ref{app:theory} provides the formal conditions and proofs.

\begin{definition}[Compact causal footprint, operational]
\label{def:ccf}
For a high-risk observation $\mathbf{X}$ with mechanism score $R(\mathbf{X})>\tau_{\mathrm{cal}}$, we say that a support $\mathcal{S}$ is a compact causal footprint \emph{with respect to the deployed mechanism score} when it satisfies three operational conditions: \textbf{compactness}, $|\mathcal{S}|\leq B$ under the image-token area budget; \textbf{localizability}, $\mathcal{S}$ is seeded by tokens whose decision-layer attention rank is promoted relative to their global rank; and \textbf{score restoration}, temporarily masking $\mathcal{S}$ returns the same mechanism score used for detection to the clean-calibrated region:
\begin{equation}
R(M_{\mathcal{S}}(\mathbf{X})) \leq \tau_{\mathrm{cal}} .
\label{eq:ccf_accept}
\end{equation}
This is not a structural-causal claim about the data-generating process; rather, $\mathcal{S}$ is sufficient under our chosen monitor to return the observation to the clean operating region. If no such compact support is validated, \net{} enters fail-safe. In our LIBERO experiments, $B=16$ image tokens, about 6\% of the visual-token grid; for other backbones, the definition is unchanged but the grid geometry and budget are recalibrated.
\end{definition}

We use Definition~\ref{def:ccf} as a falsifiable working hypothesis for the evaluated visual-triggered attacks, not a universal invariant. Figure~\ref{fig:framework} summarizes the pipeline: clean-calibrated evidence monitoring first flags high-risk observations; counterfactual support localization and localized inpainting then run before the next action query.

\subsection{Trusted-Evidence Uncertainty Analysis}\label{evidence}

\net{} adapts the \textbf{Dirichlet evidence framework}~\citep{Liang2025Trusted,hang2023trusted,ma2025estimatingllmuncertaintyevidence}, originally developed for trusted classification, to the long-horizon control setting. Whereas prior evidential learning uses a single output-layer score for class-level decisions, VLA backdoor monitoring requires per-token, per-layer evidence trajectories to capture spatially heterogeneous representation collapse. We extract Dirichlet evidence at every transformer layer through the language-model head and treat the resulting evolution pattern as a \emph{mechanism coordinate} --- not as a calibrated probability. In the evaluated BadVLA and INFUSE models, triggered states repeatedly show \textbf{spatial epistemic homogenization}: token-level uncertainty becomes nearly uniform across scene regions, weakening the contrast that helps a VLA distinguish task-relevant scene structure from background.

Dirichlet evidence is suitable here because it is both decision-relevant and spatially decomposable. By projecting hidden states through the language-model head and mapping total evidence to epistemic uncertainty, we obtain a per-token, per-layer view of evidence accumulation. Clean rollouts define the normal spatial and layer-wise geometry; triggered states are flagged only when their evidence evolution leaves that clean region. Figure~\ref{fig:eu_evolution} shows the resulting signature across BadVLA and INFUSE: clean episodes stay near the clean-normalized reference lines, while triggered episodes combine late-layer uncertainty compression with abnormal early-layer dispersion. Spatial uniformity alone is therefore insufficient; \net{} relies on the joint evidence-evolution pattern, with BadVLA layer-wise curves shown in Appendix~Figure~\ref{fig:app_badvla_layerwise_eu}.

We use a modified evidence parametrization adapted to per-token VLA monitoring. For hidden state $\mathbf{h}^{(l)}$ at layer $l$, we project through the language-model head and form per-class evidence $e_k^{(l)}=\exp((\mathbf{W}^{\mathrm{out}}\mathbf{h}^{(l)})_k)$, yielding Dirichlet concentration parameters $\tilde{\alpha}_k^{(l)}=e_k^{(l)}+1$. Total concentration mass and epistemic uncertainty are then
\begin{equation}
S^{(l)}=\sum_{k=1}^{V}\tilde{\alpha}_k^{(l)},\qquad
\mathrm{EU}^{(l)}=\frac{V}{S^{(l)}+V}.
\end{equation}
We treat $\mathrm{EU}^{(l)}$ as a mechanism coordinate for anomaly scoring rather than as a calibrated Bayesian posterior probability.
From the layer-wise and token-wise EU sequence, we construct a three-dimensional mechanism feature vector
\begin{equation}
\boldsymbol{\psi}(\mathbf{X}) =
\big[r_{\mathrm{collapse}}(\mathbf{X}),\,
r_{\mathrm{evidence}}(\mathbf{X}),\,
r_{\mathrm{early}}(\mathbf{X})\big],
\end{equation}
whose components capture spatial uncertainty collapse, shallow-to-mid evidence stabilization, and abnormal early-layer dispersion. In the OpenVLA implementation, these three families are instantiated by five raw statistics: low sequence-level mean/min EU for collapse, log-max-evidence and log-late-evidence for evidence accumulation, and layer-0 image-token EU standard deviation for early dispersion. Each statistic is signed so larger values indicate higher risk, then centered and scaled by frozen clean-calibration statistics; Appendix~\ref{app:impl} reports the exact aggregation. Here $\boldsymbol{\psi}$ is the diagnostic feature vector; the scalar mechanism score used for detection and counterfactual validation is
\begin{equation}
R(\mathbf{X})=
r_{\mathrm{collapse}}(\mathbf{X})+
r_{\mathrm{evidence}}(\mathbf{X})+
r_{\mathrm{early}}(\mathbf{X}).
\end{equation}
The three standardized components are aggregated with equal weights, preventing any single feature family from dominating calibration and avoiding learned weights, which would require triggered examples excluded by the threat model.
The threshold $\tau_{\mathrm{cal}}$ is frozen from clean validation rollouts:
\begin{equation}
\mathbf{X}\ \text{is flagged iff}\quad R(\mathbf{X})>\tau_{\mathrm{cal}} .
\end{equation}
Appendix~\ref{app:impl} details the feature definitions and calibration stress tests; no triggered examples, trigger coordinates, or poisoned validation samples are used to set the threshold.

\begin{figure*}[t]
\centering
\includegraphics[width=0.92\textwidth]{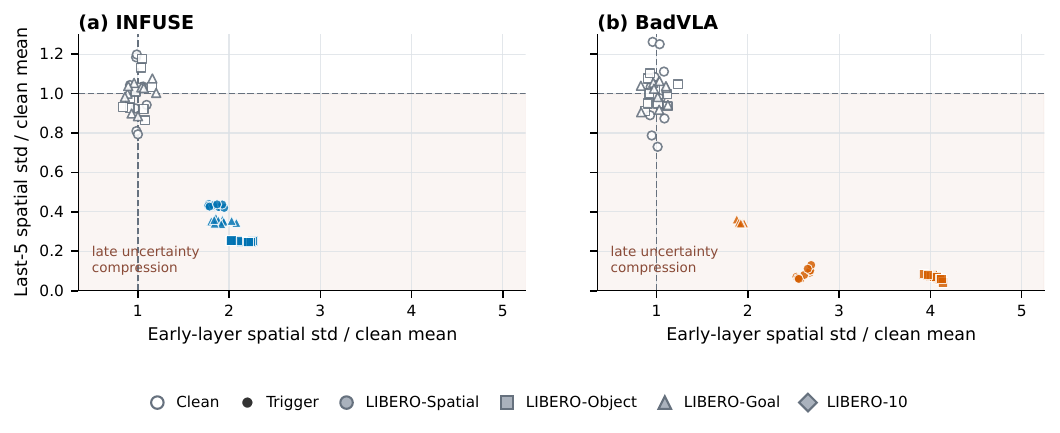}
\caption{\textbf{Evidence-evolution footprint across two VLA backdoor attacks.} Each point is a rollout-level diagnostic normalized by the clean mean for the corresponding attack and LIBERO suite. Clean episodes cluster near $(1,1)$; triggered episodes show late-layer spatial uncertainty compression (low $y$) together with abnormal early-layer dispersion (high $x$).}
\label{fig:eu_evolution}
\end{figure*}

\subsection{Causal Visual Support Localization and Recovery}\label{Attention}

After the trusted-evidence gate fires, \net{} localizes a compact support and recovers the observation. Across BadVLA and INFUSE, trigger-neighborhood tokens receive anomalous attention-derived scores, but the timing differs: INFUSE peaks earlier, while BadVLA is more concentrated in middle/deep decision maps. We therefore use attention only to seed compact candidates, not as a causal explanation.

Task objects can also be salient in clean models, and VLA backdoors need not fully hijack attention as in some VLM/LLM attacks~\citep{rong2025backdoor}. A region is accepted only if masking it reduces the mechanism score.

\begin{figure*}[t]
\centering
\begin{minipage}[t]{0.47\textwidth}
\centering
{\scriptsize Shallow layers\hfill Middle layers\hfill Deep layers}\\[-1pt]
\includegraphics[width=\linewidth]{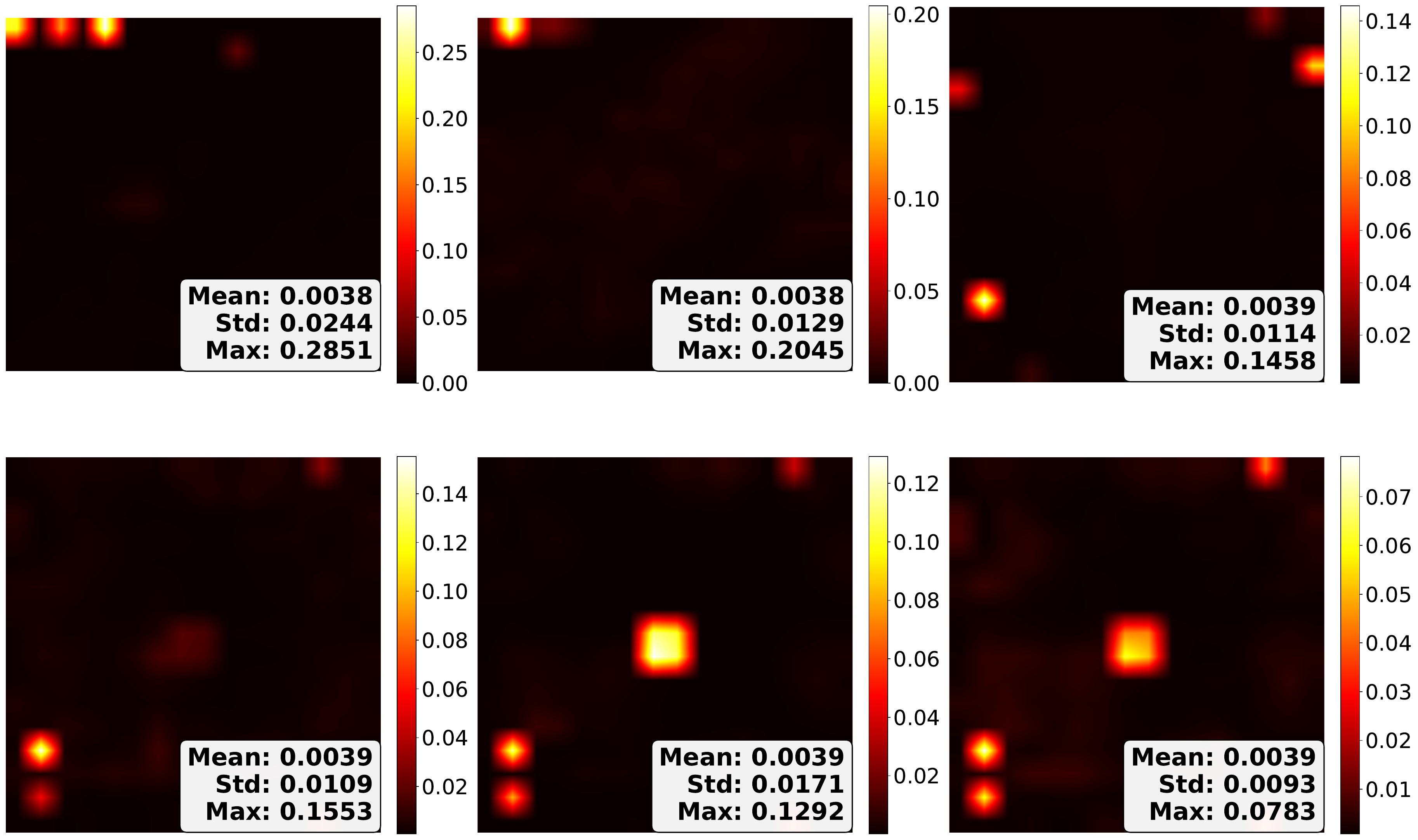}\\[-2pt]
\footnotesize (a) BadVLA
\end{minipage}\hfill
\begin{minipage}[t]{0.49\textwidth}
\centering
{\scriptsize Shallow layers\hfill Middle layers\hfill Deep layers}\\[-1pt]
\includegraphics[width=\linewidth]{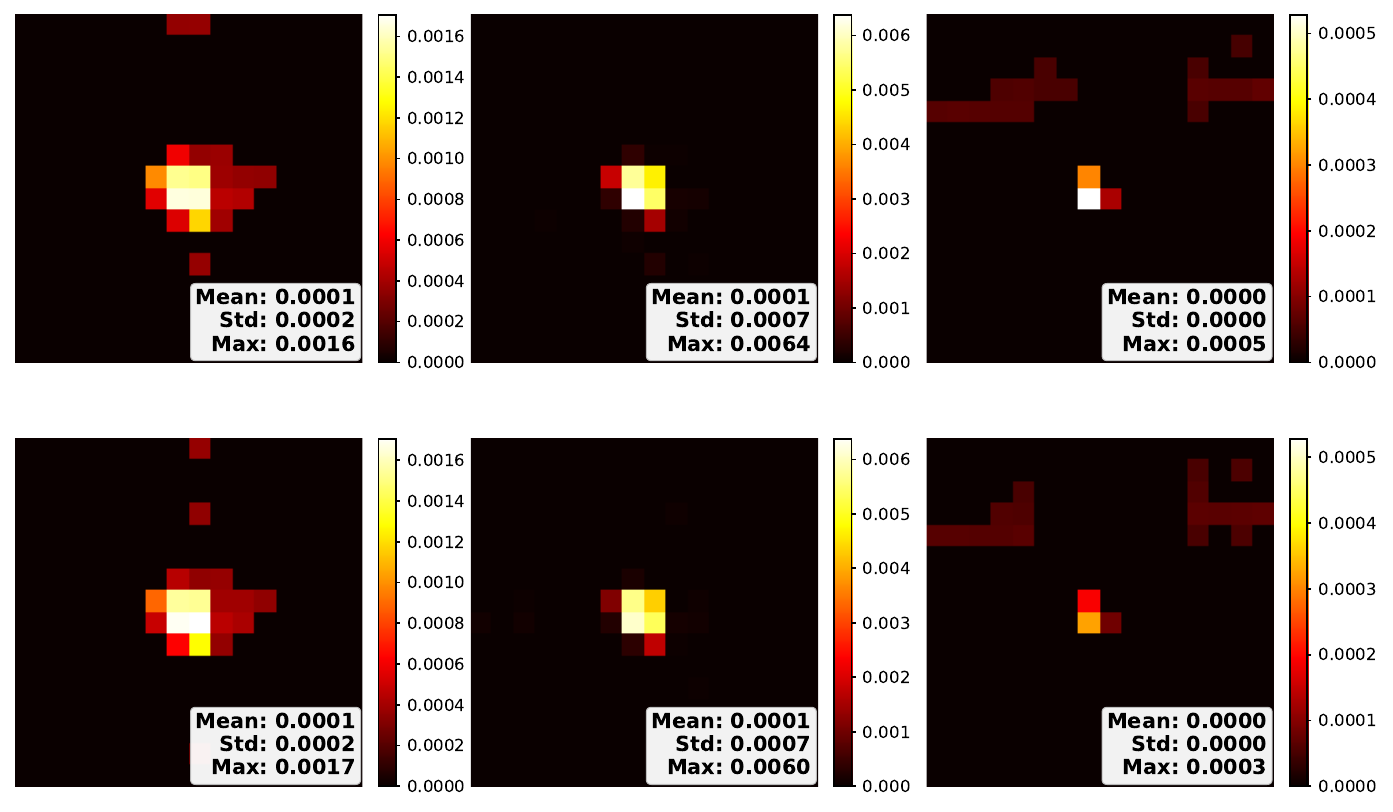}\\[-2pt]
\footnotesize (b) INFUSE
\end{minipage}
\caption{\textbf{Cross-attack attention support.} BadVLA and INFUSE produce compact trigger-neighborhood candidates with different layer timing; causal acceptance still requires counterfactual score drop. Absolute attention magnitudes differ across attacks because their injection mechanisms differ, so \net{} uses relative rank promotion across layers rather than absolute attention values.}
\label{fig:attention}
\end{figure*}

\paragraph{Candidate pool from rank promotion.}
We extract attention matrices $\mathbf{A}^{(l)} \in \mathbb{R}^{N \times N}$ from each layer, average over heads, and compare each image token's received attention under all layers versus decision layers:
\begin{align}
\mathbf{I}_{\mathrm{all}}(j) &= \frac{1}{|\mathcal{L}_{\mathrm{all}}|} \sum_{l \in \mathcal{L}_{\mathrm{all}}} \frac{1}{N}\sum_{i=1}^{N} \mathbf{A}_{i,j}^{(l)} ,\\
\mathbf{I}_{\mathrm{deep}}(j) &= \frac{1}{|\mathcal{L}_{\mathrm{deep}}|} \sum_{l \in \mathcal{L}_{\mathrm{deep}}} \frac{1}{N}\sum_{i=1}^{N} \mathbf{A}_{i,j}^{(l)} ,
\end{align}
where $\mathcal{L}_{\mathrm{all}}$ spans all layers and $\mathcal{L}_{\mathrm{deep}}$ focuses on late fusion/action-decision layers. Tokens whose deep rank rises relative to their global rank enter the candidate pool,
\begin{equation}
\mathcal{C}_0 =
\operatorname*{Top}_{j,\,K_{\mathrm{cand}}}\!\left\{
\frac{\mathbf{I}_{\mathrm{deep}}(j)}
{\mathbf{I}_{\mathrm{all}}(j)+\varepsilon}
\right\}.
\end{equation}
The implementation expands this ranked list into compact token/window hypotheses $\mathcal{H}(\mathcal{C}_0)$ and keeps a small fixed shortlist: $K_{\mathrm{cand}}=8$ for the default rollout configuration and $K_{\mathrm{cand}}=12$ for scaled closure/ablation diagnostics, with the same area budget $B=16$.

\paragraph{Causal validation by score drop.}
The candidate pool is not a fixed mask. \net{} enumerates compact supports $\mathcal{S}\in\mathcal{H}(\mathcal{C}_0)$ under the area budget $B$, applies a temporary counterfactual mask $M_{\mathcal{S}}$, and measures
\begin{equation}
\Delta(\mathcal{S}) = R(\mathbf{X}) - R(M_{\mathcal{S}}(\mathbf{X})).
\label{eq:score_drop}
\end{equation}
A valid trigger support should reduce the same mechanism score used for detection while occupying little area, so the final support is selected by
\begin{equation}
\mathcal{S}^{\star}
= \operatorname{ParetoElbow}\left\{ \big(|\mathcal{S}|,\Delta(\mathcal{S})\big): \mathcal{S}\in \mathcal{H}(\mathcal{C}_0) \right\} ,
\label{eq:pareto_support}
\end{equation}
where $\operatorname{ParetoElbow}$ keeps the area--drop frontier and maximizes drop gain minus area cost, with ties favoring compact supports. Candidates that fail $R(M_{\mathcal{S}}(\mathbf{X}))\leq\tau_{\mathrm{cal}}$ in Equation~\ref{eq:ccf_accept} are rejected; if none pass, the system enters fail-safe. Visual-density core search and closure refinements use image-grid connectivity and local score-drop similarity; Appendix~\ref{app:impl} gives implementation.

\paragraph{Recovery by localized inpainting.}
Once $\mathcal{S}^{\star}$ is accepted under Definition~\ref{def:ccf}, the observation is recovered by
\begin{equation}
\mathbf{X}_{\text{def}} = \operatorname{Inpaint}\big(\mathbf{X}_{\text{original}}, \mathcal{S}^{\star}\big).
\end{equation}
Inpainting is the default operator because it removes the causal support while preserving surrounding scene structure; local-mean, zero, and blur masking are treated as recovery ablations.

\paragraph{Per-observation routing and complexity.}
Each observation is routed to pass-through, localized recovery, or fail-safe. Clean frames pay only the detection cost; candidate search, counterfactual masking, and inpainting run only after the gate fires. Appendix~Table~\ref{tab:app_runtime_diagnostics} reports runtime accounting.\par

\section{Experiments}
Our experiments audit clean-preserving recovery, clean-calibrated detection, localization, baselines, and cross-architecture transfer; the appendix provides ablations, intervals, sweeps, runtime counters, oracle studies, and failure cases.

\subsection{Setup and Metrics}
We evaluate OpenVLA~\citep{kim2024openvlao} on LIBERO Spatial/Object/Goal/LIBERO-10~\citep{LIBERO2023} and test $\pi_{0.5}$~\citep{intelligence2025pi} on LIBERO-style and REAL transfer tasks. Training and large-scale rollout evaluation use an eight-GPU NVIDIA A800 machine. Attack reproduction follows the public BadVLA and INFUSE protocols, with defense code layered only at inference time. Each LIBERO cell uses 10 tasks with 50 rollouts per task. Main rows require paired 500-episode clean/trigger logs under the same checkpoint and frozen defense configuration; incomplete pairs appear only as appendix diagnostics.

\textbf{Attacks and baselines.}
We evaluate BadVLA objective-decoupled poisoning and INFUSE module injection after Stage-II clean fine-tuning. Baselines cover input preprocessing (JPEG $q=20$, Gaussian noise $\epsilon=0.08$) and parameter repair ($\Delta W$ Auditing, reset ratio $r=20\%$). Fine-Pruning remains in Appendix~Table~\ref{tab:app_fine_pruning_diagnostics} because available runs either lack paired 500-episode logs or collapse clean competence. Unlike whole-input or weight-repair baselines, \net{} performs observation-conditional monitoring and localized recovery; reproduction details are in Appendix~\ref{app:baseline_details}.

\begin{table*}[t]
\centering
\caption{OpenVLA/LIBERO defense results. Cells report SR(w/o) / SR(w) / VLA-ASR (\%); deltas denote VLA-ASR change from no defense. \textbf{Best} and \underline{second-best} VLA-ASR per attack are highlighted. Avg reports the unweighted mean VLA-ASR across the four LIBERO suites.}
\label{tab:main_results}
\setlength{\tabcolsep}{2.5pt}
\renewcommand{\arraystretch}{1.05}
\footnotesize
\resizebox{\textwidth}{!}{%
\begin{tabular}{ll|ccc|ccc|ccc|ccc|c}
\toprule
\multirow{2}{*}{\textbf{Attack}} & \multirow{2}{*}{\textbf{Defense}} 
& \multicolumn{3}{c|}{\textbf{LIBERO-Spatial}} 
& \multicolumn{3}{c|}{\textbf{LIBERO-Object}} 
& \multicolumn{3}{c|}{\textbf{LIBERO-Goal}} 
& \multicolumn{3}{c|}{\textbf{LIBERO-10}} 
& \textbf{Avg} \\
\cmidrule(lr){3-5} \cmidrule(lr){6-8} \cmidrule(lr){9-11} \cmidrule(lr){12-14}
& & SR(w/o)$\uparrow$ & SR(w)$\uparrow$ & VLA-ASR$\downarrow$ 
  & SR(w/o)$\uparrow$ & SR(w)$\uparrow$ & VLA-ASR$\downarrow$ 
  & SR(w/o)$\uparrow$ & SR(w)$\uparrow$ & VLA-ASR$\downarrow$ 
  & SR(w/o)$\uparrow$ & SR(w)$\uparrow$ & VLA-ASR$\downarrow$ 
  & VLA-ASR$\downarrow$ \\
\midrule

\rowcolor{gray!8}
\multicolumn{15}{l}{\textbf{Attack: BadVLA}} \\

  & No defense
    & 97.8 & 0.0  & 100.0
    & 98.6 & 0.0  & 100.0
    & 98.0 & 0.0  & 100.0
    & 94.0 & 0.0  & 100.0
    & 100.0\% \\

  & JPEG ($q{=}20$)
    & 96.4 & 0.0 & 100.0\,\textcolor{gray}{$\pm$0.0}
    & 98.6 & 0.0 & 100.0\,\textcolor{gray}{$\pm$0.0}
    & 97.6 & 0.0 & 100.0\,\textcolor{gray}{$\pm$0.0}
    & 91.4 & 0.0 & 100.0\,\textcolor{gray}{$\pm$0.0}
    & 100.0\% \\

  & Gaussian ($\epsilon{=}0.08$)
    & 96.0 & 0.0 & 100.0\,\textcolor{gray}{$\pm$0.0}
    & 98.4 & 0.0 & 100.0\,\textcolor{gray}{$\pm$0.0}
    & 97.6 & 0.0 & 100.0\,\textcolor{gray}{$\pm$0.0}
    & 85.0 & 0.0 & 100.0\,\textcolor{gray}{$\pm$0.0}
    & 100.0\% \\

  & $\Delta W$ Auditing ($r{=}20\%$)
    & 98.0 & 0.0 & 100.0\,\textcolor{gray}{$\pm$0.0}
    & 97.6 & 0.0 & 100.0\,\textcolor{gray}{$\pm$0.0}
    & 99.6 & 0.0 & 100.0\,\textcolor{gray}{$\pm$0.0}
    & 90.4 & 0.4 & \underline{99.6}\,\textcolor{green!50!black}{$\downarrow$0.4}
    & 99.9\% \\

  & \textbf{\net{} (Ours)}
    & \textbf{98.2} & \textbf{97.6} & \textbf{0.6}\,\textcolor{green!50!black}{$\downarrow$99.4}
    & \textbf{99.0} & \textbf{90.6} & \textbf{8.5}\,\textcolor{green!50!black}{$\downarrow$91.5}
    & \textbf{97.8} & \textbf{88.6} & \textbf{9.4}\,\textcolor{green!50!black}{$\downarrow$90.6}
    & \textbf{92.8} & \textbf{84.0} & \textbf{9.5}\,\textcolor{green!50!black}{$\downarrow$90.5}
    & \cellcolor{gray!15}\textbf{7.0\%}\,\textcolor{green!50!black}{$\downarrow$93.0\%} \\

\midrule

\rowcolor{gray!8}
\multicolumn{15}{l}{\textbf{Attack: INFUSE}} \\

  & No defense
    & 97.6 & 0.0  & 100.0
    & 98.0 & 0.0  & 100.0
    & 95.8 & 0.0  & 100.0
    & 92.6 & 0.0  & 100.0
    & 100.0\% \\

  & JPEG ($q{=}20$)
    & 97.6 & 0.0 & 100.0\,\textcolor{gray}{$\pm$0.0}
    & 97.4 & 0.0 & 100.0\,\textcolor{gray}{$\pm$0.0}
    & 94.4 & 0.0 & 100.0\,\textcolor{gray}{$\pm$0.0}
    & 87.8 & 0.0 & 100.0\,\textcolor{gray}{$\pm$0.0}
    & 100.0\% \\

  & Gaussian ($\epsilon{=}0.08$)
    & 97.8 & 0.0 & 100.0\,\textcolor{gray}{$\pm$0.0}
    & 97.0 & 0.0 & 100.0\,\textcolor{gray}{$\pm$0.0}
    & 94.0 & 0.0 & 100.0\,\textcolor{gray}{$\pm$0.0}
    & 85.0 & 0.0 & 100.0\,\textcolor{gray}{$\pm$0.0}
    & 100.0\% \\

  & $\Delta W$ Auditing ($r{=}20\%$)
    & 97.6 & 77.2 & \underline{20.9}\,\textcolor{green!50!black}{$\downarrow$79.1}
    & 98.0 & 61.4 & \underline{37.3}\,\textcolor{green!50!black}{$\downarrow$62.7}
    & 92.6 & 1.0 & 98.9\,\textcolor{green!50!black}{$\downarrow$1.1}
    & 89.2 & 15.0 & \underline{83.2}\,\textcolor{green!50!black}{$\downarrow$16.8}
    & \underline{60.1\%} \\

  & \textbf{\net{} (Ours)}
    & \textbf{97.4} & \textbf{97.4} & \textbf{0.0}\,\textcolor{green!50!black}{$\downarrow$100.0}
    & \textbf{98.0} & \textbf{97.4} & \textbf{0.6}\,\textcolor{green!50!black}{$\downarrow$99.4}
    & \textbf{95.0} & \textbf{93.4} & \textbf{1.7}\,\textcolor{green!50!black}{$\downarrow$98.3}
    & \textbf{92.4} & \textbf{86.6} & \textbf{6.3}\,\textcolor{green!50!black}{$\downarrow$93.7}
    & \cellcolor{gray!15}\textbf{2.2\%}\,\textcolor{green!50!black}{$\downarrow$97.8\%} \\

\bottomrule
\end{tabular}
}
\end{table*}

\textbf{Metrics.}
VLA-ASR measures residual backdoor effectiveness as triggered-task degradation:
\begin{equation}
\text{VLA-ASR} = \max\left(0,\ \frac{SR^{(w/o)} - SR^{(w)}}{SR^{(w/o)} + \varepsilon}\right) \times 100\%,
\end{equation}
where $SR^{(w/o)}$ and $SR^{(w)}$ are defended success rates without and with the trigger, and $\varepsilon = 10^{-6}$. Clipping handles rollout noise; VLA-ASR $=0$ means triggered performance matches the trigger-free level, and $100\%$ means all trigger-free success is lost. Detection rate is $\text{DR}=TP/(TP+FN)$, and clean false alarm rate is $\text{FAR}=FP/(FP+TN)$. Reporting SR(w/o), SR(w), VLA-ASR, DR, and FAR separates detection from recovery; Appendix~\ref{app:impl} reports Wilson intervals.
Unlike classification-style ASR, which measures the fraction of inputs mapped to an attacker's target label, VLA-ASR measures task-level degradation relative to the defended trigger-free baseline. This formulation is appropriate for long-horizon control, where many distinct failure trajectories can constitute successful attacks and no single target action need be specified.

\subsection{Main Defense Results}
\label{sec:main_results}
Table~\ref{tab:main_results} reports paired 500-episode cells for the final configuration; earlier variants and targeted subsets remain appendix diagnostics. Standard defenses leave high residual VLA-ASR: for BadVLA, no defense, JPEG, Gaussian, and $\Delta W$ Auditing retain 99.9--100.0\% average VLA-ASR, while \net{} reduces VLA-ASR to 0.6\%, 8.5\%, 9.4\%, and 9.5\% on the four LIBERO suites. This pattern is important because the recovery is not achieved by globally suppressing the policy: \net{} keeps trigger-free SR close to the undefended clean row and pulls triggered rollouts back toward that same no-trigger baseline. INFUSE also transfers after clean Stage-II fine-tuning, with final triggered SR of 97.4\%, 97.4\%, 93.4\%, and 86.6\%, indicating that the monitored mechanism persists beyond the BadVLA optimization path rather than overfitting to one poisoning recipe.

\par\vspace{-2pt}
\noindent\textbf{Detection reliability.}
Table~\ref{tab:detection_reliability} separates detection from recovery. Both attacks show 0/2000 clean false alarms. BadVLA detection misses are confined to Goal (489/500 DR); LIBERO-10 reaches 500/500 detections, so its residual gap is attributable to localization and recovery rather than detection.

\begin{table}[t]
\centering
\caption{\textbf{Detection reliability.} Frozen clean-calibrated thresholds; episodes / total = percent.}
\label{tab:detection_reliability}
\scriptsize
\setlength{\tabcolsep}{4pt}
\renewcommand{\arraystretch}{1.08}
\resizebox{0.82\linewidth}{!}{%
\begin{tabular}{lcc|cc}
\toprule
\textbf{Suite} & \multicolumn{2}{c|}{\textbf{BadVLA}} & \multicolumn{2}{c}{\textbf{INFUSE}} \\
\cmidrule(lr){2-3}\cmidrule(lr){4-5}
& \textbf{FAR}$\downarrow$ & \textbf{DR}$\uparrow$ & \textbf{FAR}$\downarrow$ & \textbf{DR}$\uparrow$ \\
\midrule
Spatial & 0/500 = 0.0 & 500/500 = 100.0 & 0/500 = 0.0 & 500/500 = 100.0 \\
Object  & 0/500 = 0.0 & 500/500 = 100.0 & 0/500 = 0.0 & 500/500 = 100.0 \\
Goal    & 0/500 = 0.0 & 489/500 = 97.8  & 0/500 = 0.0 & 500/500 = 100.0 \\
LIBERO-10 & 0/500 = 0.0 & 500/500 = 100.0 & 0/500 = 0.0 & 500/500 = 100.0 \\
\midrule
\rowcolor{gray!10}
Total & 0/2000 = 0.0 & 1989/2000 = 99.5 & 0/2000 = 0.0 & 2000/2000 = 100.0 \\
\bottomrule
\end{tabular}%
}
\end{table}

\begin{table}[t]
\centering
\caption{\textbf{Cross-architecture $\pi_{0.5}$ transfer.} Four LIBERO-style configurations; \net{} lowers VLA-ASR while recovering triggered task success.}
\label{tab:pi05_libero_results}
\footnotesize
\setlength{\tabcolsep}{4pt}
\resizebox{\linewidth}{!}{%
\begin{tabular}{lcccccccc}
\toprule
\textbf{Method} & \multicolumn{2}{c}{\textbf{Spatial}} & \multicolumn{2}{c}{\textbf{Object}} & \multicolumn{2}{c}{\textbf{Goal}} & \multicolumn{2}{c}{\textbf{LIBERO-10}}\\
\cmidrule(lr){2-3} \cmidrule(lr){4-5} \cmidrule(lr){6-7} \cmidrule(lr){8-9}
& SR(w) $\uparrow$ & VLA-ASR $\downarrow$ & SR(w) $\uparrow$ & VLA-ASR $\downarrow$ & SR(w) $\uparrow$ & VLA-ASR $\downarrow$ & SR(w) $\uparrow$ & VLA-ASR $\downarrow$\\
\midrule
Clean baseline          & 95.8 & --    & 98.4 & --    & 92.8 & --    & 88.4 & --    \\
BadVLA (no trigger)     & 91.2 & --    & 96.0 & --    & 93.2 & --    & 79.8 & --    \\
BadVLA (no defense)     & 0.2  & 99.78 & 0.2  & 99.79 & 0.6  & 99.36 & 0.0  & 100   \\
\textbf{\net{} (Ours)}
& \textbf{90.8} & \textbf{0.44}
& \textbf{90.2} & \textbf{6.04}
& \textbf{82.2} & \textbf{11.80}
& \textbf{60.6} & \textbf{24.06} \\
\bottomrule
\end{tabular}%
}
\end{table}

\FloatBarrier

\subsection{Robustness and Transfer}
\noindent
\textbf{Trigger-variant robustness (same backbone).}
Figure~\ref{fig:trigger_robustness} summarizes robustness to trigger appearance, size, and location. BadVLA remains highly effective across variants, while \net{} keeps residual VLA-ASR near 0--9\%, yielding absolute reductions of 86.0--100 percentage points. The size and position sweeps indicate that the detector does not memorize a fixed scale or center coordinate.

\begin{figure}[!ht]
\centering
\includegraphics[width=0.82\linewidth]{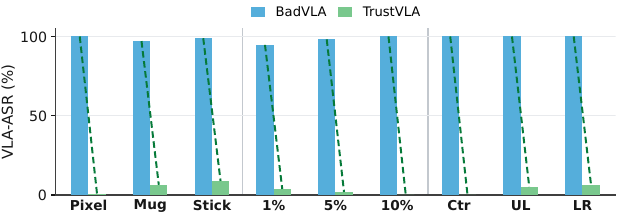}
\caption{\textbf{Robustness sweep.} Trigger appearance, size, and location variants; bars show VLA-ASR before and after \net{}. Lower is better.}
\label{fig:trigger_robustness}
\end{figure}

\noindent\textbf{Cross-architecture transfer ($\pi_{0.5}$).}
Beyond OpenVLA, Table~\ref{tab:pi05_libero_results} evaluates $\pi_{0.5}$ on four LIBERO-style transfer logs. Because $\pi_{0.5}$ uses a different action decoder, we reuse only architecture-independent monitor pieces: language-head evidence features, small clean calibration, and visual-grid localization/inpainting (details in Appendix~\ref{app:impl}). These rows are a transfer feasibility check rather than a tuned $\pi_{0.5}$ defense. Backdoored models keep high no-trigger SR but reach 99.36--100\% VLA-ASR; \net{} lowers VLA-ASR to 0.44--24.06\% and recovers triggered SR to 60.6--90.8\%.

\begin{figure*}[t]
\centering
\includegraphics[width=0.82\textwidth]{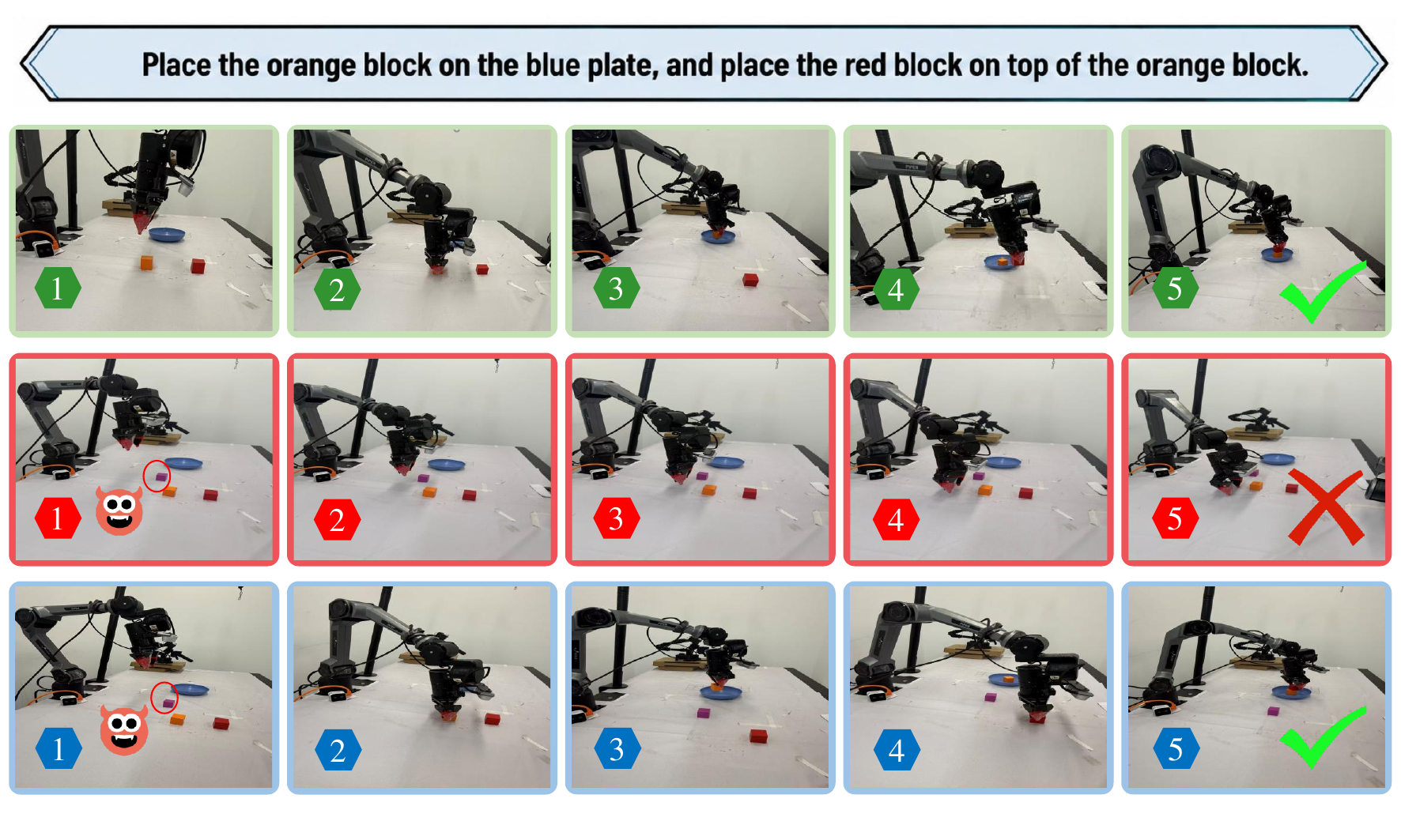}
\caption{\textbf{REAL visual-trigger rollout (Blocks task).} The trigger-free policy succeeds (top), the purple-trigger no-defense rollout fails (middle), and \net{} localizes/inpaints the trigger support to recover task-directed behavior (bottom).}
\label{fig:real_rollout}
\end{figure*}

\begin{table}[t]
\begin{minipage}[t]{0.37\linewidth}
\vspace{0pt}
\raggedright
\textbf{Physical-robot setup.}
We deploy the BadVLA-poisoned $\pi_{0.5}$ policy on a two-camera 6-DoF manipulator with a shared purple-block trigger; Appendix~\ref{app:physical_robot_setup} gives the full setup. Across 30 rollouts per task, triggers reduce both tasks to 1/30 success, while \net{} recovers 15/30 and 24/30. Figure~\ref{fig:real_rollout} shows a representative rollout.
\end{minipage}\hfill
\begin{minipage}[t]{0.60\linewidth}
\vspace{0pt}
\centering
\caption{$\pi_{0.5}$ REAL transfer. Physical-robot tasks; 30 rollouts per cell; ASR uses BadVLA clean as SR(w/o).}
\label{tab:real_results}
\scriptsize
\setlength{\tabcolsep}{4.5pt}
\renewcommand{\arraystretch}{0.98}
\begin{tabular}{@{}lcccc@{}}
\toprule
\textbf{Method} & \multicolumn{2}{c}{\begin{tabular}[c]{@{}c@{}}\textbf{Strawberries}\\[-1pt]\textit{all berries $\rightarrow$ bowl}\end{tabular}} & \multicolumn{2}{c}{\begin{tabular}[c]{@{}c@{}}\textbf{Blocks}\\[-1pt]\textit{plate then stack}\end{tabular}} \\
\cmidrule(lr){2-3} \cmidrule(lr){4-5}
& SR $\uparrow$ & ASR $\downarrow$ & SR $\uparrow$ & ASR $\downarrow$ \\
\midrule
Clean       & 26/30 & -- & 30/30 & --  \\
BadVLA clean  & 17/30 & -- & 28/30 & --  \\
BadVLA trig.  & 1/30  & 94.12 & 1/30 & 96.43  \\
\textbf{TrustVLA} & \textbf{15/30} & \textbf{11.76} & \textbf{24/30} & \textbf{14.29} \\
\bottomrule
\end{tabular}
\end{minipage}
\end{table}

\subsection{Ablations, Baselines, and Deployment}
Appendix~Tables~\ref{tab:app_ablation} and~\ref{tab:app_oracle_inpaint_upper_bound} show that localization is not merely attention: attention-only recovery is unstable, score-drop variants change \emph{which region} is masked, and high-drop cluster closure improves LIBERO-10 (\textbf{57.0 $\to$ 83.5}) while sometimes over-extending in Goal; remaining gaps separate localization from inpainting.
Table~\ref{tab:main_results} compares \net{} with JPEG, Gaussian noise, and $\Delta W$ Auditing; these retain high residual VLA-ASR, while Fine-Pruning collapses clean competence (\mbox{Appendix~Table~\ref{tab:app_fine_pruning_diagnostics}}). \net{} assumes self-hosted hidden-state access and a small clean calibration set; Appendix~\ref{app:impl} reports stress tests, anomaly diagnostics, and runtime.

\section{Conclusion}

We identify a compact causal footprint shared by BadVLA and INFUSE, and propose \net{}, a mechanism-guided inference-time defense that monitors per-token, per-layer Dirichlet evidence, validates compact supports by counterfactual score drop, and recovers observations via localized inpainting. Across OpenVLA/LIBERO and $\pi_{0.5}$ transfer evaluations, \net{} reduces VLA-ASR while preserving clean-task performance. More broadly, clean-calibrated mechanism scores with counterfactual support search may help diagnose other internal-state anomalies whenever clean rollouts define a normal operating region. \noindent\textbf{Limitation.}\label{sec:limitations}
\net{} targets \emph{visual-triggered} VLA backdoors with self-hosted hidden-state access; recovery remains conditional on the compact-footprint hypothesis. Semantic or non-visual triggers, severe distribution shift, and adaptive attackers that mimic evidence while suppressing localization remain outside our robustness claim (Appendix~\ref{app:adaptive_threats}).



\bibliographystyle{plainnat}
\bibliography{main}


\appendix

\section*{Appendix Overview}

In the supplemental material:
\begin{itemize}[leftmargin=1.5em,itemsep=1pt,topsep=1pt,parsep=0pt]
    \item \textbf{Appendix~\ref{app:theory}} provides the stylized theory behind epistemic homogenization, rank promotion, compact-support recovery, and adaptive-threat diagnostics.
    \item \textbf{Appendix~\ref{app:impl}} provides implementation details, small clean calibration, confidence intervals, hyperparameters, and runtime accounting.
    \item \textbf{Appendix~\ref{app:ablation_protocol}} reports component ablations, the oracle inpainting upper bound, and additional localization evidence.
    \item \textbf{Appendix~\ref{app:baseline_details}} documents baseline reproduction details, including Fine-Pruning diagnostics.
    \item \textbf{Appendix~\ref{app:dw_audit}} explains how $\Delta W$ Auditing is used as a checkpoint-repair baseline.
    \item \textbf{Appendix~\ref{app:cross_attack}} verifies the mechanism on INFUSE and reports post-clean-fine-tuning evaluations.
    \item \textbf{Appendix~\ref{app:failure_modes}} separates detection, localization, recovery, and non-localizable-trigger failure modes.
    \item \textbf{Appendix~\ref{app:supp_run_visuals}} includes qualitative rollout visualizations.
\end{itemize}

\section{Theoretical Analysis}
\label{app:theory}

We provide stylized propositions and a recovery theorem that formalize the empirical mechanism introduced in Section~\ref{evidence}, and connect them to the diagnostics used by \net{}. The analysis abstracts a VLA policy into token-level hidden states, evidence scores, attention logits, and a Lipschitz action head. The goal is to explain why the empirical features used by \net{} are expected under a persistent trigger-induced representation shift, not to characterize all VLA architectures.

\paragraph{Formal setup.}
Let $\{\mathbf{h}_i\}_{i=1}^{N}$ denote image-token hidden states before action decoding. For clean inputs, write $\mathbf{h}_i=\mathbf{s}_i+\boldsymbol{\xi}_i$, where $\mathbf{s}_i$ is spatially varying task semantics and $\boldsymbol{\xi}_i$ is bounded representation noise. A triggered input adds a shared direction,
\begin{equation}
\mathbf{h}^{\mathrm{bd}}_i = \mathbf{s}_i + \rho \mathbf{b} + \boldsymbol{\xi}_i ,
\end{equation}
where $\mathbf{b}$ is the backdoor direction and $\rho>0$ is its strength. Symbols are summarized in Table~\ref{tab:app_notation}.

For a fixed layer $l$ and visual token $i$, the per-token total evidence
$E_i=S_i^{(l)}-V=\sum_{k=1}^{V}\exp(\mathbf{w}_k^\top\mathbf{h}_i^{(l)})$ is the accumulated evidence after removing the unit Dirichlet prior from the concentration mass. The corresponding epistemic uncertainty is $\mathrm{EU}_i=V/(S_i^{(l)}+V)=V/(E_i+2V)$, which is equivalent to the per-token form of Eq.~(2) under the substitution $S=E+V$. The layer superscript is omitted when clear from context.

\begin{assumption}[Shared evidence shift]
For trigger-influenced visual tokens, the backdoor direction induces a common positive shift in the accumulated evidence:
\begin{equation}
E^{\mathrm{bd}}_i = c(\rho)E_i+r_i,\qquad c(\rho)>1,\quad |r_i|\leq\delta_E ,
\end{equation}
where $c(\rho)$ increases with trigger strength and $\delta_E$ captures residual token-specific variation.
\end{assumption}

\begin{proposition}[Backdoor-induced epistemic homogenization]\label{prop:eu_homogenization}
Under the shared evidence shift assumption, the spatial variance of epistemic uncertainty $\mathrm{EU}_i=V/(E_i+2V)$ decreases as the backdoor evidence strength increases:
\begin{equation}
\mathrm{Var}_i\!\left[\mathrm{EU}^{\mathrm{bd}}_i\right]
\leq
\frac{V^2}{\left(c(\rho)E_{\min}+2V-\delta_E\right)^4}
\mathrm{Var}_i[c(\rho)E_i+r_i],
\end{equation}
and therefore decreases on the order of $O(c(\rho)^{-2})$ when residual variation scales with clean evidence.
\end{proposition}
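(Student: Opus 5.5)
The plan is to view $\mathrm{EU}^{\mathrm{bd}}_i=f(E^{\mathrm{bd}}_i)$ with $f(x)=V/(x+2V)$, and to push the spatial variance of the evidence through the Lipschitz constant of $f$ on the range actually occupied by the triggered evidence. Write $X_i=E^{\mathrm{bd}}_i=c(\rho)E_i+r_i$. The first step is a lower bound on this range. Since $E_i\geq E_{\min}$ and $r_i\geq-\delta_E$, every $X_i\geq c(\rho)E_{\min}-\delta_E$, which gives $X_i+2V\geq L:=c(\rho)E_{\min}+2V-\delta_E$. We assume implicitly that $L>0$; if needed we would state this as a mild condition, namely $\delta_E<c(\rho)E_{\min}+2V$. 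On $[L-2V,\infty)$ the derivative satisfies $|f'(x)|=V/(x+2V)^2\leq V/L^2$, so $f$ is $(V/L^2)$-Lipschitz there.

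The second step is the standard variance contraction for a Lipschitz map. For any $K$-Lipschitz $f$, use the pairwise representation
\[
\mathrm{Var}_i[f(X_i)]=\frac{1}{2N^2}\sum_{i,j}\big(f(X_i)-f(X_j)\big)^2 .
\]
Each term obeys $(f(X_i)-f(X_j))^2\leq K^2(X_i-X_j)^2$, and summing gives $\mathrm{Var}_i[f(X_i)]\leq K^2\,\mathrm{Var}_i[X_i]$. Here $\mathrm{Var}_i$ is the empirical variance over visual tokens. With $K=V/L^2$, so that $K^2=V^2/L^4$, this is exactly the displayed inequality. The mean value theorem applied pairwise on the interval between $X_i$ and $X_j$ justifies the Lipschitz step, because both endpoints lie in the region where the derivative bound holds.

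The third step derives the $O(c(\rho)^{-2})$ rate from the bound. If the residual variation scales with clean evidence, say $r_i=c(\rho)\tilde r_i$ with $\tilde r_i$ bounded, or more loosely $\mathrm{Var}_i[r_i]=O(c^2)$, then $\mathrm{Var}_i[cE_i+r_i]\leq 2c^2\mathrm{Var}_i[E_i]+2\mathrm{Var}_i[r_i]=O(c^2)$. The denominator meanwhile grows like $L^4=\Theta(c^4)$ once $cE_{\min}$ dominates $2V-\delta_E$, which holds when $E_{\min}>0$. The ratio is therefore $O(c^{-2})$. Because $c(\rho)$ increases in $\rho$, the bound decreases with trigger strength, and this yields the monotone reading in the statement.

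The main obstacle is interpretive rather than technical. The phrase "variance decreases" can only be read as "the upper bound decreases like $c^{-2}$", not as a pointwise monotonicity of the true variance. In addition, the scaling hypothesis on $r_i$ and the positivity of $E_{\min}$ and $L$ must be made explicit for the rate to hold. I would state these conditions in the proof, noting that with bounded, non-scaling residuals ($\delta_E$ fixed) the rate in fact improves to $O(c^{-2})$ with an additional $O(c^{-4})$ term.
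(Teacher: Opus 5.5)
Your proposal is correct and follows the paper's proof essentially step for step. Like the paper, you bound $|f'|$ by $V/(c(\rho)E_{\min}+2V-\delta_E)^2$ using the lower bound on triggered evidence, apply the mean value theorem pairwise, square and average over token pairs, and then argue the $O(c(\rho)^{-2})$ rate. Your explicit conditions ($L>0$, and the pairwise variance identity) tighten what the paper leaves implicit; one caveat is that under your scaling hypothesis $r_i=c\tilde r_i$ the bound $\delta_E$ itself grows like $c$, so $L=\Theta(c)$ requires $\delta_E\leq\kappa\,c\,E_{\min}$ for some $\kappa<1$, not merely $E_{\min}>0$.
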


\begin{proposition}[Deep-layer rank promotion]\label{prop:rank_promotion}
Let $\Delta_m^{\mathrm{deep}}=\sigma_{(m)}^{\mathrm{deep}}-\sigma_t^{\mathrm{deep}}$ be the margin between the trigger token's clean deep-layer attention score and the $m$-th largest clean deep-layer attention score. If the deep-layer backdoor bias satisfies $\bar{\beta}_{\mathrm{deep}}>\Delta_m^{\mathrm{deep}}$, while its all-layer average bias remains below the corresponding global margin, then the trigger token enters the deep top-$m$ set but remains outside the all-layer top-$m$ set:
\begin{equation}
t\in \mathcal{T}_{\mathrm{deep}}
\quad\text{and}\quad
t\notin \mathcal{T}_{\mathrm{all}} .
\end{equation}
\end{proposition}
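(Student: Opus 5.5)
I will prove Proposition~\ref{prop:rank_promotion} by a direct order-statistic argument on the two score vectors $\mathbf{I}_{\mathrm{deep}}$ and $\mathbf{I}_{\mathrm{all}}$. First I fix the implicit model, following the formal setup above. Under the trigger, each token $j$'s deep-layer received-attention score becomes $\tilde\sigma_j^{\mathrm{deep}}=\sigma_j^{\mathrm{deep}}+\beta_j^{\mathrm{deep}}$, with a layer-averaged bias $\bar\beta_{\mathrm{deep}}$ on the trigger token $t$ and a bias of zero, or at most a negligible one, on non-trigger tokens. Similarly, $\tilde\sigma_j^{\mathrm{all}}=\sigma_j^{\mathrm{all}}+\beta_j^{\mathrm{all}}$, with all-layer average bias $\bar\beta_{\mathrm{all}}=\frac{|\mathcal{L}_{\mathrm{deep}}|}{|\mathcal{L}_{\mathrm{all}}|}\bar\beta_{\mathrm{deep}}+\frac{|\mathcal{L}_{\mathrm{all}}\setminus\mathcal{L}_{\mathrm{deep}}|}{|\mathcal{L}_{\mathrm{all}}|}\bar\beta_{\mathrm{shallow}}$. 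This is just linearity of the averages defining $\mathbf{I}_{\mathrm{deep}}$ and $\mathbf{I}_{\mathrm{all}}$. I define the global margin analogously, $\Delta_m^{\mathrm{all}}=\sigma_{(m)}^{\mathrm{all}}-\sigma_t^{\mathrm{all}}$, and read the hypothesis as $\bar\beta_{\mathrm{all}}<\Delta_m^{\mathrm{all}}$. Here $\mathcal{T}_{\mathrm{deep}}$ and $\mathcal{T}_{\mathrm{all}}$ denote the top-$m$ sets under the triggered scores.

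For the deep claim, I show that after the shift $t$ strictly exceeds the $m$-th largest score among the other tokens. Because other tokens are unshifted, the top-$(m)$ clean competitors keep scores at most $\sigma_{(m)}^{\mathrm{deep}}$ at rank $m$. Then $\tilde\sigma_t^{\mathrm{deep}}=\sigma_t^{\mathrm{deep}}+\bar\beta_{\mathrm{deep}}>\sigma_{(m)}^{\mathrm{deep}}$, so at most $m-1$ other tokens can beat $t$, which gives $t\in\mathcal{T}_{\mathrm{deep}}$. I need to count carefully whether $\sigma_{(m)}$ is taken over all tokens or over tokens other than $t$. If $t$ is itself already in the clean top-$m$, the margin is nonpositive and the claim is trivial. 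Otherwise the $m$ largest clean scores all belong to tokens other than $t$, and the count works.

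For the global claim, I argue symmetrically. The $m$ tokens achieving $\sigma_{(1)}^{\mathrm{all}},\dots,\sigma_{(m)}^{\mathrm{all}}$ are distinct from $t$, because the global margin is positive. Each of them has triggered score at least $\sigma_{(m)}^{\mathrm{all}}$, while $\tilde\sigma_t^{\mathrm{all}}<\sigma_t^{\mathrm{all}}+\Delta_m^{\mathrm{all}}=\sigma_{(m)}^{\mathrm{all}}$. So $m$ tokens strictly dominate $t$, and $t\notin\mathcal{T}_{\mathrm{all}}$.

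The main obstacle is justifying that non-trigger tokens are essentially unshifted. A shared backdoor direction could raise every token's attention, and attention renormalization (softmax) means boosting $t$ lowers the others. I plan to handle this in two ways. First, I state a robust version assuming competitor shifts are bounded by $\eta$; the conditions then become $\bar\beta_{\mathrm{deep}}>\Delta_m^{\mathrm{deep}}+\eta$ and $\bar\beta_{\mathrm{all}}<\Delta_m^{\mathrm{all}}-\eta$. Second, I note that under softmax renormalization competitors only decrease in the deep layers, which helps the deep claim. For the global claim it suffices to bound their decrease by $\eta$. Ties are resolved by strict inequalities.
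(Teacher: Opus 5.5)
Your proposal is correct and follows the same direct order-statistic argument as the paper: push the trigger token's deep score past the $m$-th largest competitor and keep its all-layer score below the corresponding global competitor, with non-trigger tokens unshifted. The paper leaves that unshifted-competitor assumption tacit, whereas you state it explicitly, add the bounded-shift $\eta$ variant, and write out the global step that the paper only asserts. One small point on that global step: the paper takes $\sigma_{(m)}$ over non-trigger tokens only, so the $m$ dominating competitors are automatically distinct from $t$, and you should drop your appeal to positivity of $\Delta_m^{\mathrm{all}}$, which does not follow from $\bar\beta_{\mathrm{all}}<\Delta_m^{\mathrm{all}}$ unless the bias is assumed nonnegative.
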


Let $\phi(x)$ denote the visual representation passed to the action head $g$, i.e., $\hat{\mathbf{a}}=g(\phi(x))$.

\begin{theorem}[Detection, localization, and recovery under margins]\label{thm:trustvla_margin}
Assume clean and triggered episodes are separated by a positive margin in the mechanism score, and trigger tokens satisfy the rank-promotion margin in Proposition~\ref{prop:rank_promotion}. Suppose there exists a compact support $\mathcal{S}^{\star}$ in the candidate pool, with $|\mathcal{S}^{\star}|\leq B$, such that counterfactual masking returns the score to the clean-calibrated region, $R(M_{\mathcal{S}^{\star}}(x_{\mathrm{bd}}))\leq\tau_{\mathrm{cal}}$. Let $I_{\mathcal{S}^{\star}}$ denote the localized recovery operator, including inpainting, applied to that support. If the recovered visual representation has residual error
\begin{equation}
\|\phi(I_{\mathcal{S}^{\star}}(x_{\mathrm{bd}}))-\phi(x_{\mathrm{clean}})\|_2\leq \eta ,
\end{equation}
and the policy head is $L$-Lipschitz in the visual representation, then \net{} detects the triggered input, accepts $\mathcal{S}^{\star}$ as an operational support, and the defended action satisfies
\begin{equation}
\|\hat{\mathbf{a}}_{\mathrm{def}}-\hat{\mathbf{a}}_{\mathrm{clean}}\|_2
\leq L\eta .
\end{equation}
\end{theorem}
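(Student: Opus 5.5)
The theorem chains three implications, and we will prove them in the order the pipeline executes them: detection, then acceptance, then the action bound.

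\textbf{Detection.} We first make the margin assumption explicit. There exist $\gamma>0$ and a threshold $\tau_{\mathrm{cal}}$ frozen from clean rollouts such that
\[
R(x_{\mathrm{clean}})\leq\tau_{\mathrm{cal}} \quad\text{and}\quad R(x_{\mathrm{bd}})\geq\tau_{\mathrm{cal}}+\gamma .
\]
Concretely, $\tau_{\mathrm{cal}}$ may be taken as any value between the clean supremum and the triggered infimum. Since $\gamma>0$, the flagging rule $R(\mathbf{X})>\tau_{\mathrm{cal}}$ fires on $x_{\mathrm{bd}}$, and the observation is routed to localization rather than pass-through.

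\textbf{Acceptance.} Next we show that $\mathcal{S}^{\star}$ is among the enumerated hypotheses and passes the acceptance test. Proposition~\ref{prop:rank_promotion} applies under its stated margin on $\bar\beta_{\mathrm{deep}}$. It places the trigger tokens in the deep top-$m$ set but outside the all-layer top-$m$ set. Hence their ratio $\mathbf{I}_{\mathrm{deep}}(j)/(\mathbf{I}_{\mathrm{all}}(j)+\varepsilon)$ exceeds that of non-promoted tokens, and they enter $\mathcal{C}_0$ provided $K_{\mathrm{cand}}\geq m$.

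Here the hypothesis is stated as given: ``$\mathcal{S}^{\star}$ in the candidate pool'', meaning $\mathcal{S}^{\star}\in\mathcal{H}(\mathcal{C}_0)$. Because $|\mathcal{S}^{\star}|\leq B$, it survives the area-budget enumeration. The hypothesis $R(M_{\mathcal{S}^{\star}}(x_{\mathrm{bd}}))\leq\tau_{\mathrm{cal}}$ is exactly Eq.~\eqref{eq:ccf_accept}, and it gives the strictly positive drop $\Delta(\mathcal{S}^{\star})\geq\gamma$ in Eq.~\eqref{eq:score_drop}. Compactness, localizability and score restoration then make $\mathcal{S}^{\star}$ a compact causal footprint in the sense of Definition~\ref{def:ccf}. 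It is therefore not rejected, and the fail-safe branch is not entered.

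\textbf{Main obstacle.} The hard step is whether $\operatorname{ParetoElbow}$ actually selects $\mathcal{S}^{\star}$ rather than some other accepted support. The theorem's wording ``accepts $\mathcal{S}^{\star}$ as an operational support'' suggests proving only that $\mathcal{S}^{\star}$ passes acceptance, so we will read it that way. We will also note that the conclusion holds for whichever accepted support is used, as long as the residual-error hypothesis refers to the support actually inpainted. If uniqueness were required, the first thing to try is to assume $\mathcal{S}^{\star}$ lies on the area--drop frontier and maximizes drop minus area cost; we would flag this as an additional assumption rather than derive it.

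\textbf{Action bound.} Finally, the defended observation is $\mathbf{X}_{\mathrm{def}}=I_{\mathcal{S}^{\star}}(x_{\mathrm{bd}})$, so $\hat{\mathbf{a}}_{\mathrm{def}}=g(\phi(I_{\mathcal{S}^{\star}}(x_{\mathrm{bd}})))$ and $\hat{\mathbf{a}}_{\mathrm{clean}}=g(\phi(x_{\mathrm{clean}}))$. The $L$-Lipschitz property of $g$ gives
\[
\|\hat{\mathbf{a}}_{\mathrm{def}}-\hat{\mathbf{a}}_{\mathrm{clean}}\|_2\leq L\,\|\phi(I_{\mathcal{S}^{\star}}(x_{\mathrm{bd}}))-\phi(x_{\mathrm{clean}})\|_2\leq L\eta ,
\]
which is the claimed bound.
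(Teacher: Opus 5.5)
Your proposal is correct and follows the paper's proof essentially step for step: margin separation gives detection, the candidate-pool and score-restoration hypotheses give acceptance under Eq.~\eqref{eq:ccf_accept}, and $L$-Lipschitzness of $g$ applied at $\phi(I_{\mathcal{S}^{\star}}(x_{\mathrm{bd}}))$ gives the $L\eta$ bound. You are more careful than the paper in flagging that $\operatorname{ParetoElbow}$ need not return $\mathcal{S}^{\star}$ itself, so $\eta$ must refer to the support actually inpainted (the paper only asserts that the Pareto rule picks a compact high-drop support). One inference of yours does not hold but is harmless: top-$m$ rank promotion does not imply that the trigger token's $\mathbf{I}_{\mathrm{deep}}/\mathbf{I}_{\mathrm{all}}$ ratio dominates, and you never need it, because candidate-pool membership is assumed.
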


\paragraph{Remark on continuous actions.}
The $L$-Lipschitz assumption applies to the continuous representation and pre-discretization action output used in the theorem. It does not by itself certify invariance of discrete action tokens after an argmax or sampling step: a small logit perturbation can still flip a token near a decision boundary. The success rates in Table~\ref{tab:main_results} should therefore be read as empirical evidence that the residual $\eta$ is usually small enough to avoid catastrophic action flips, not as a tight certified consequence of the Lipschitz bound.

\begin{proof}[Proof of Proposition~\ref{prop:eu_homogenization}]
Let $f(x)=V/(x+2V)$. For any evidence value $x\geq x_{\min}$,
\begin{equation}
|f'(x)| = \frac{V}{(x+2V)^2} \leq \frac{V}{(x_{\min}+2V)^2}.
\end{equation}
Triggered evidence satisfies $E_i^{\mathrm{bd}}=c(\rho)E_i+r_i\geq c(\rho)E_{\min}-\delta_E$. By the mean-value theorem,
\begin{equation}
|\mathrm{EU}_i^{\mathrm{bd}}-\mathrm{EU}_j^{\mathrm{bd}}|
\leq
\frac{V}{(c(\rho)E_{\min}+2V-\delta_E)^2}
|E_i^{\mathrm{bd}}-E_j^{\mathrm{bd}}|.
\end{equation}
Squaring and averaging over token pairs yields the variance bound. When residual variation is bounded or scales no faster than the clean evidence variation, the denominator grows faster than the numerator as $c(\rho)$ increases, so token-wise uncertainty becomes spatially homogenized.
\end{proof}

\begin{proof}[Proof of Proposition~\ref{prop:rank_promotion}]
Let $\sigma_t^{\mathrm{deep}}$ be the trigger token's clean deep-layer attention score and let $\sigma_{(m)}^{\mathrm{deep}}$ be the $m$-th largest clean score among non-trigger tokens. The clean margin is $\Delta_m^{\mathrm{deep}} = \sigma_{(m)}^{\mathrm{deep}} - \sigma_t^{\mathrm{deep}}$. If $\bar{\beta}_{\mathrm{deep}}>\Delta_m^{\mathrm{deep}}$, then $\sigma_t^{\mathrm{deep}}+\bar{\beta}_{\mathrm{deep}} > \sigma_{(m)}^{\mathrm{deep}}$, so the trigger token must enter the deep top-$m$ set. If the all-layer average bias is below the analogous all-layer margin, the same token remains outside the all-layer top-$m$ set. Therefore the trigger token lies in $\mathcal{T}_{\mathrm{deep}}\setminus\mathcal{T}_{\mathrm{all}}$, placing it in the candidate pool used by counterfactual support search.
\end{proof}

\begin{proof}[Proof of Theorem~\ref{thm:trustvla_margin}]
Margin separation means there exists a threshold $\tau_{\mathrm{cal}}$ and margin $\gamma>0$ such that clean inputs satisfy $R(\mathbf{X})\leq \tau_{\mathrm{cal}}-\gamma$ and triggered inputs satisfy $R(\mathbf{X})\geq \tau_{\mathrm{cal}}+\gamma$, so thresholding at $\tau_{\mathrm{cal}}$ separates the two cases. Proposition~\ref{prop:rank_promotion} ensures that trigger tokens enter the candidate pool when the deep-layer bias margin holds. By assumption, a compact support $\mathcal{S}^{\star}$ returns the score to the clean-calibrated region and therefore passes the score-restoration acceptance rule; among accepted supports, the Pareto rule selects a compact high-drop support.

The recovered observation is $I_{\mathcal{S}^{\star}}(x_{\mathrm{bd}})$, not merely the masked image; the residual $\eta$ includes both any remaining trigger perturbation and any inpainting distortion. By assumption, $\|\phi(I_{\mathcal{S}^{\star}}(x_{\mathrm{bd}}))-\phi(x_{\mathrm{clean}})\|_2\leq \eta$. By $L$-Lipschitzness of $g$,
\begin{equation}
\|\hat{\mathbf{a}}_{\mathrm{def}}-\hat{\mathbf{a}}_{\mathrm{clean}}\|_2
=
\|g(\phi(I_{\mathcal{S}^{\star}}(x_{\mathrm{bd}})))-g(\phi(x_{\mathrm{clean}}))\|_2
\leq
L\eta .
\end{equation}
Under detection and localization margins, the defended action remains close to the clean action.
\end{proof}

\subsection{Mechanistic Interpretation and Adaptive Threats}
\label{app:mechanism_formulae}
\label{app:adaptive_threats}

The propositions above give margin statements. We add three interpretive points that read across the diagnostics, then frame what an adaptive attack would have to preserve.

The Dirichlet view separates accumulated evidence from residual epistemic uncertainty. With $\alpha_k=e_k+1$ and $S=\sum_k\alpha_k=E+V$, the normalized preference $p_k=\alpha_k/S$ can be sharp while $\mathrm{EU}=V/(S+V)=V/(E+2V)$ still records whether the model has gathered broad evidence. A backdoor shortcut can increase $S$ through a shared evidence mode without preserving the token-dependent residuals needed for manipulation, so Dirichlet uncertainty tests whether the evidence supporting an action remains spatially structured the way clean rollouts predict, rather than certifying the action itself. The same shortcut model predicts low effective rank in the triggered token cloud: writing $\mathbf{H}^{(l),\mathrm{bd}} \approx \mathbf{H}^{(l),\mathrm{clean}} + z(x)\rho_l\mathbf{s}(\mathbf{b}^{(l)})^\top$ adds a rank-one component, so the effective rank $r_{\mathrm{eff}}(\mathbf{H})=\exp(-\sum_j p_j\log p_j)$ with $p_j=\sigma_j^2/\sum_k\sigma_k^2$ decreases when the trigger magnitude dominates local scene variation. Counterfactual score drop turns this signal into a localization criterion: a simple additive approximation yields
\begin{equation}
R(\mathbf{X}) - R(M_{\mathcal{S}}(\mathbf{X})) \approx \rho\sum_{i\in \mathcal{S}\cap\mathcal{S}^{\star}} q_i - \lambda_{\mathrm{ctx}}|\mathcal{S}\setminus\mathcal{S}^{\star}|,
\end{equation}
which explains why attention-only can fail (salient task objects need not reduce $R$), max-drop can over-select destructive regions, and Pareto/closure selection favors compact supports that suppress the mechanism while preserving task context.

\begin{figure*}[t]
\centering
\setlength{\tabcolsep}{3pt}
\begin{tabular}{ccc}
\includegraphics[width=0.31\textwidth]{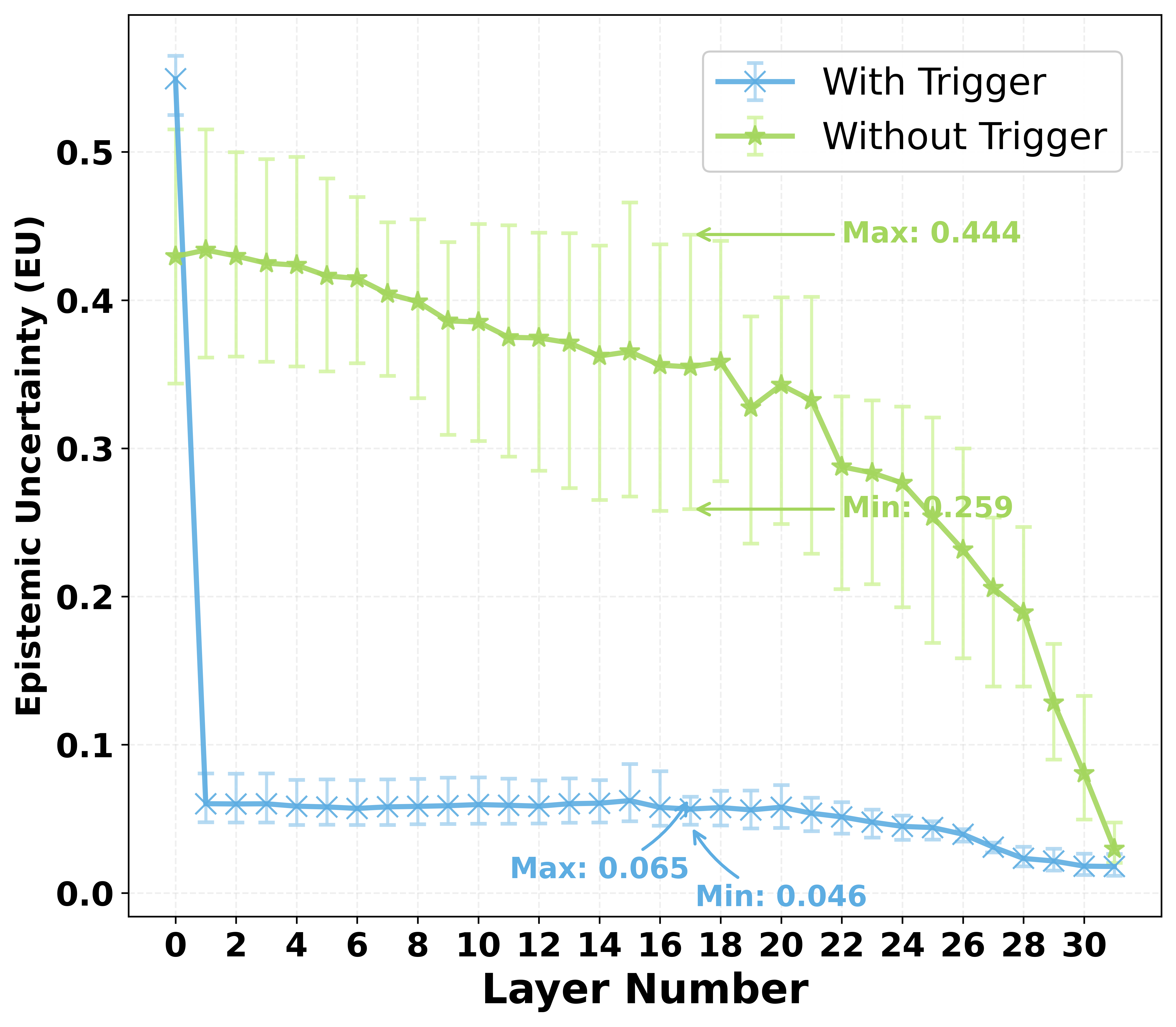} &
\includegraphics[width=0.31\textwidth]{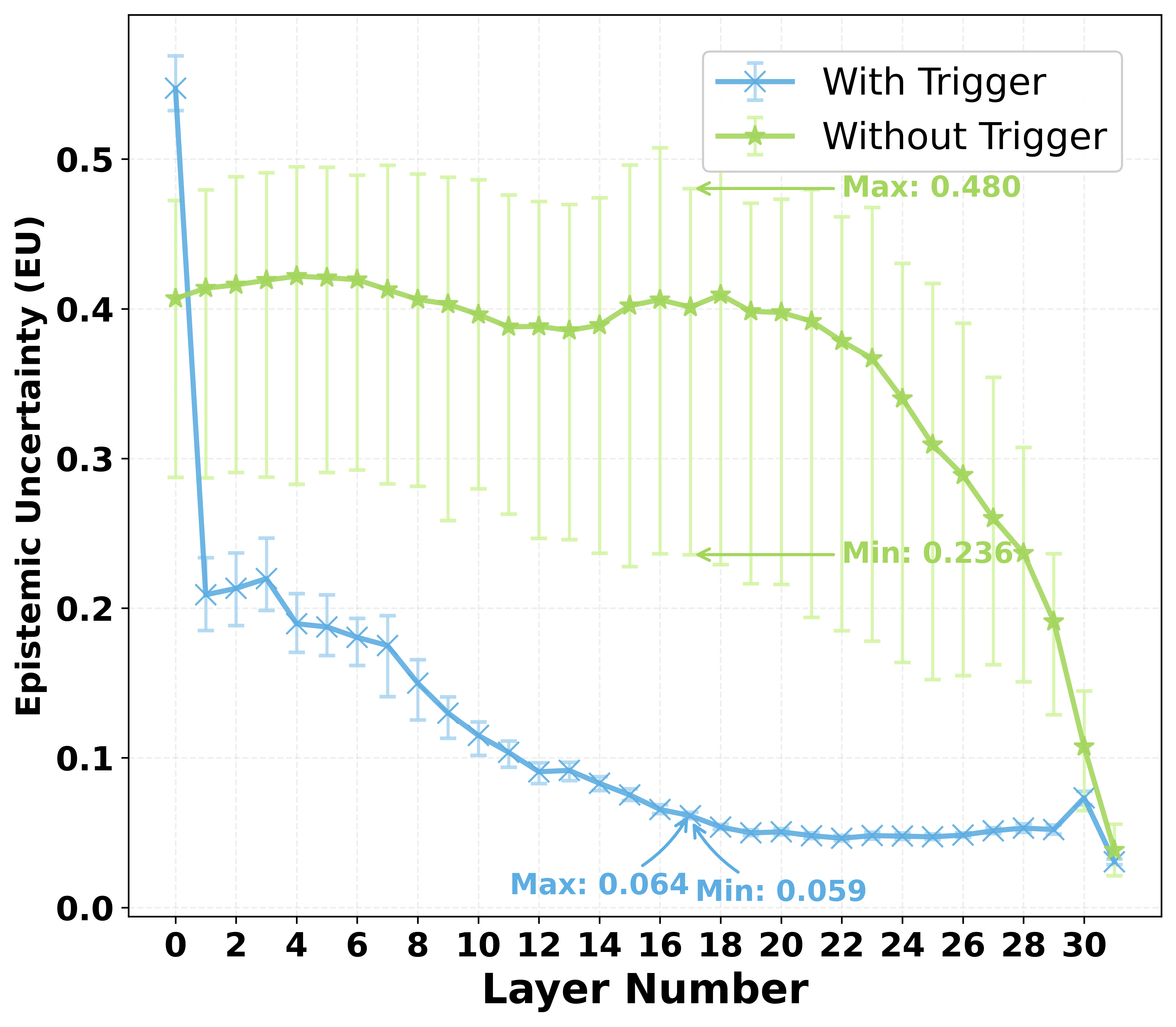} &
\includegraphics[width=0.31\textwidth]{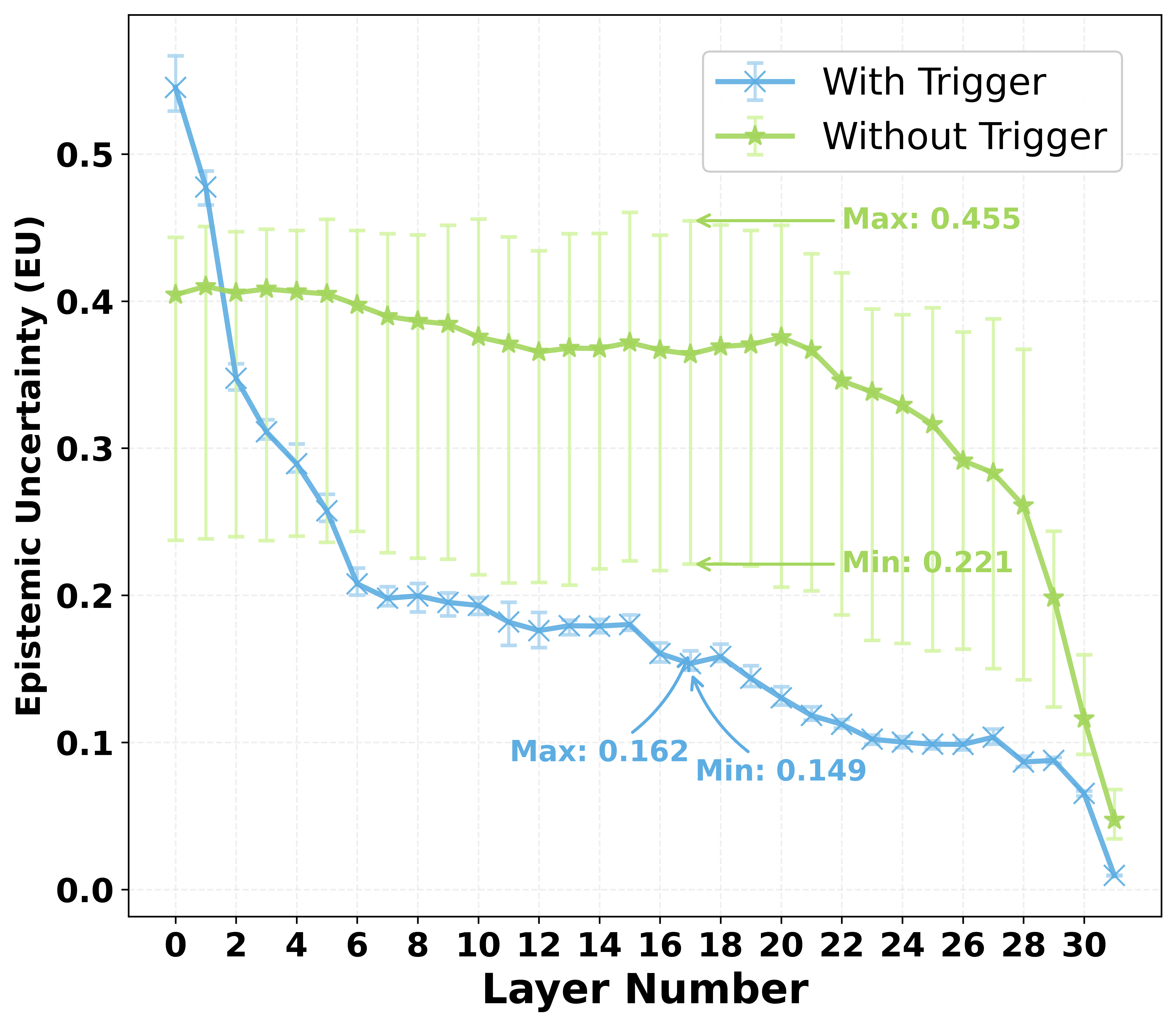} \\
{\scriptsize (a) LIBERO Object} &
{\scriptsize (b) LIBERO Spatial} &
{\scriptsize (c) LIBERO Goal} \\[3pt]
\includegraphics[width=0.31\textwidth]{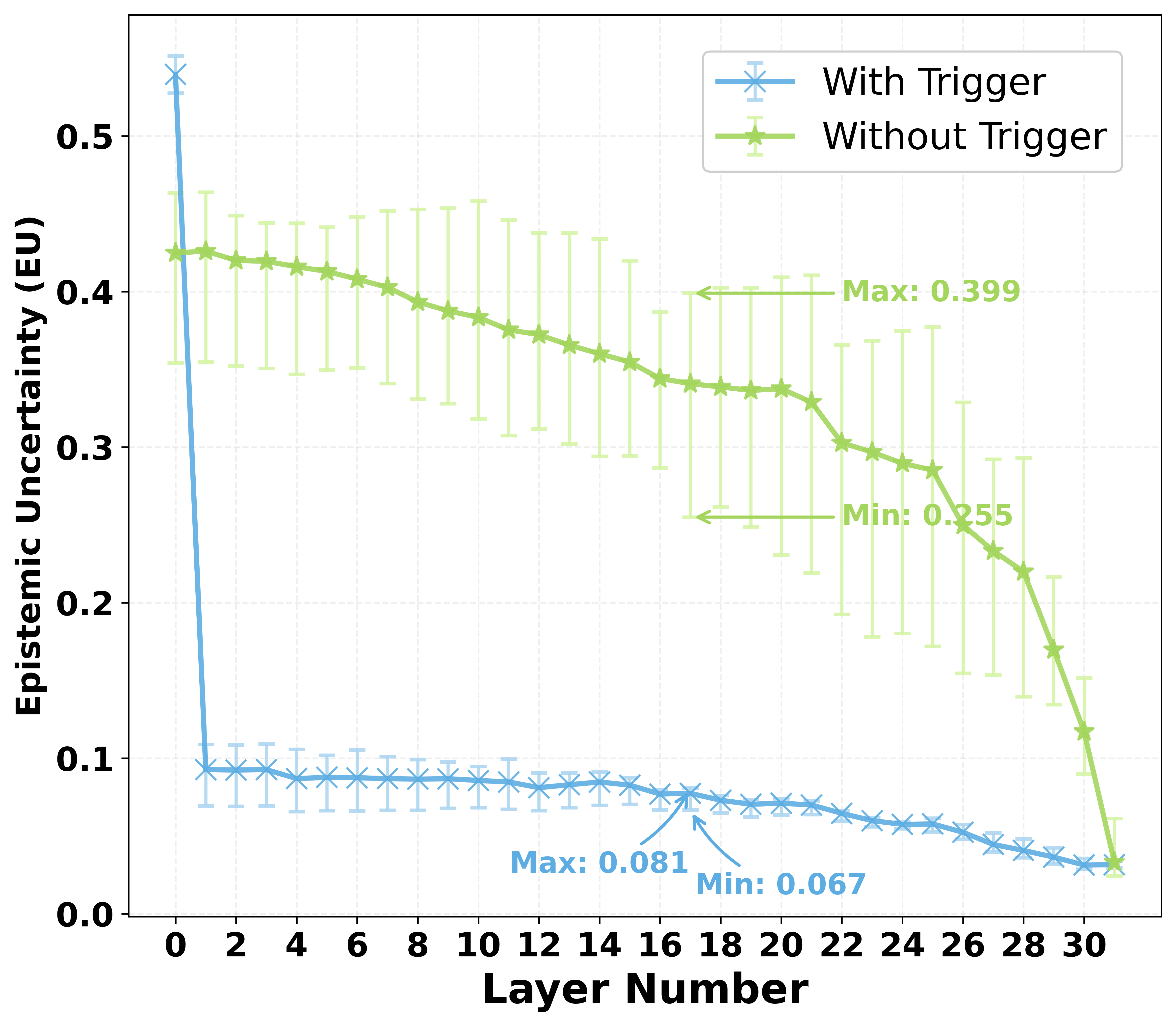} &
\includegraphics[width=0.31\textwidth]{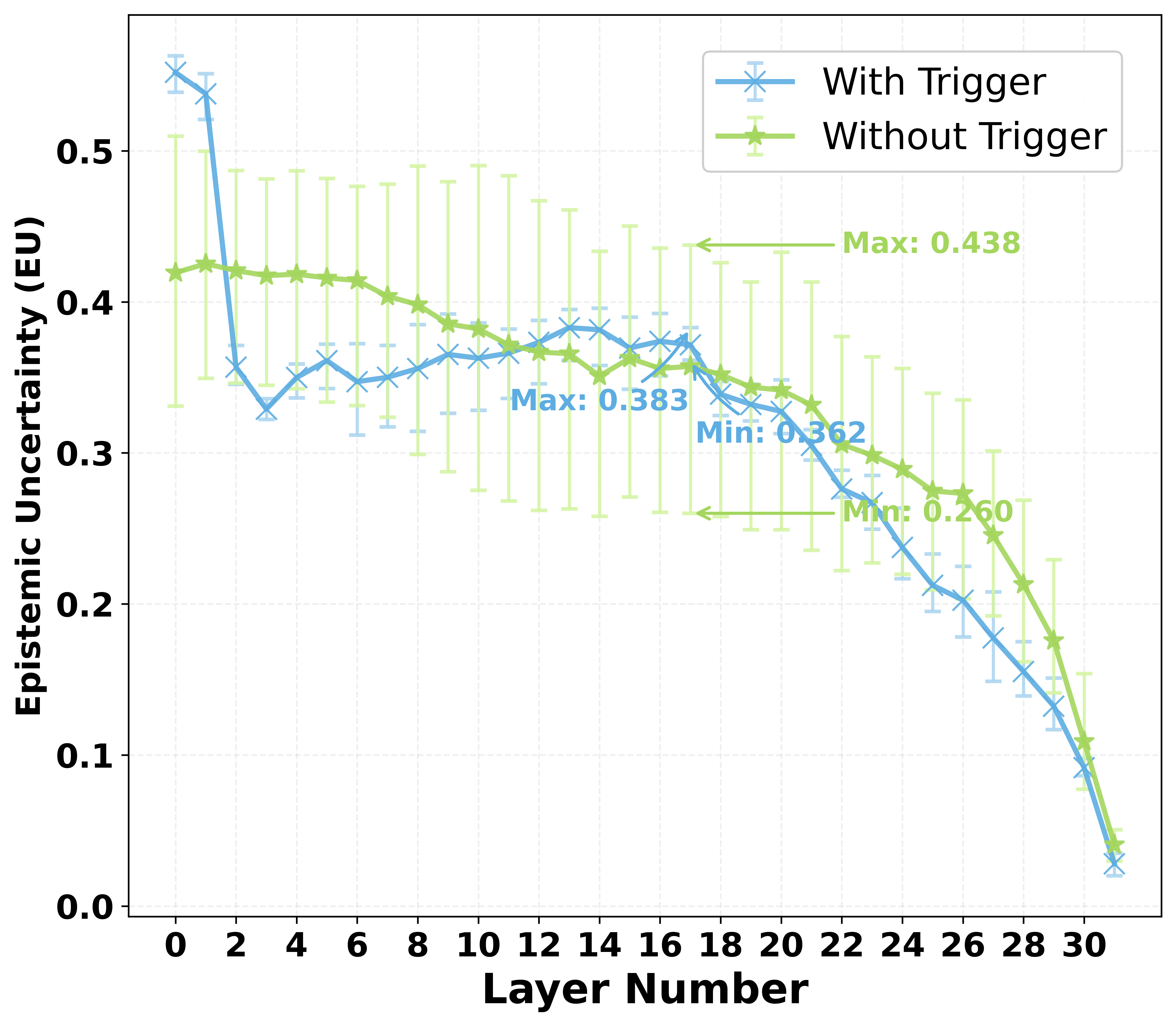} &
\includegraphics[width=0.31\textwidth]{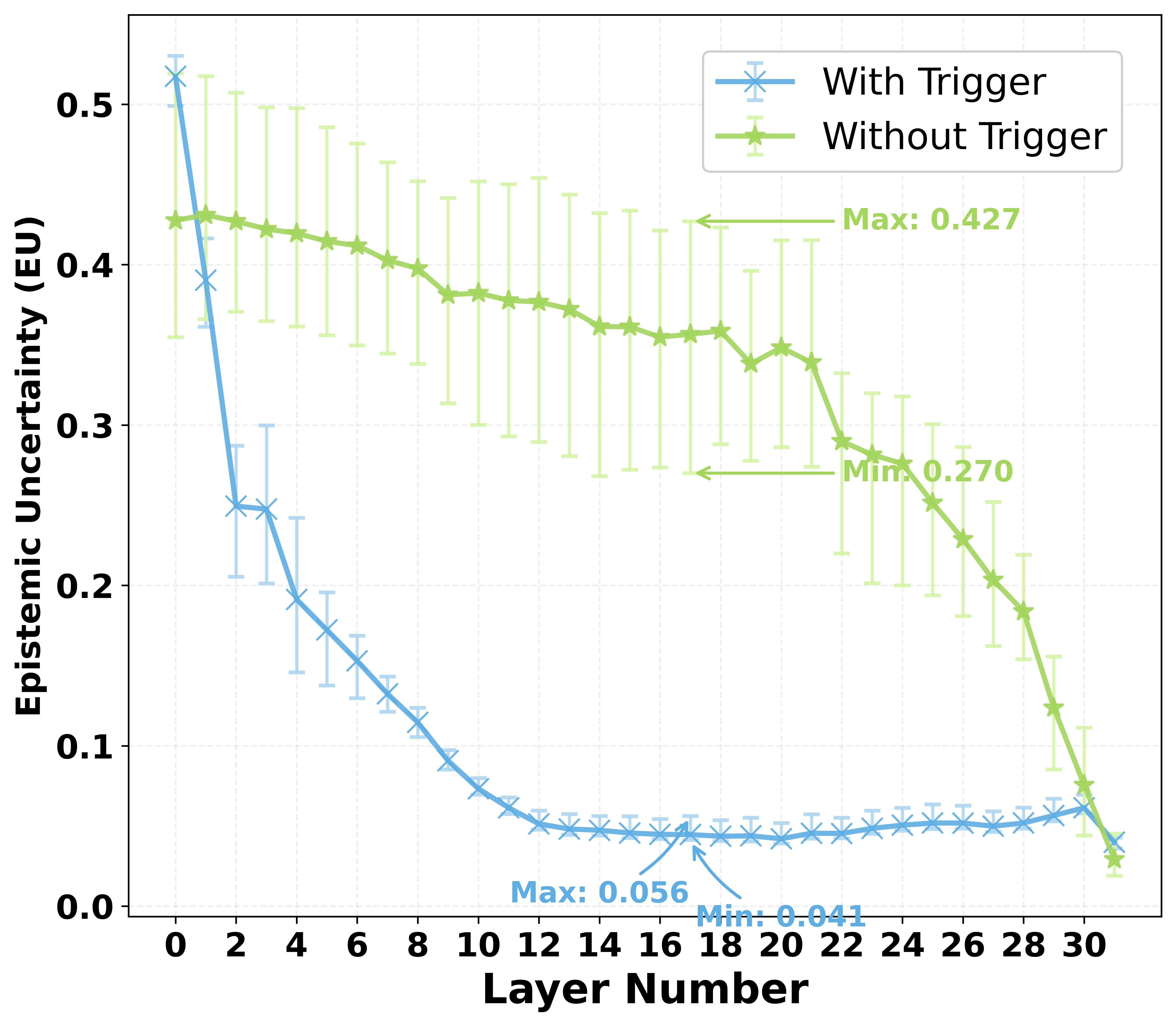} \\
{\scriptsize (d) LIBERO-10} &
{\scriptsize (e) Object, smaller trigger} &
{\scriptsize (f) Object, top-left trigger}
\end{tabular}
\caption{\textbf{Layer-wise epistemic uncertainty under BadVLA.} Triggered observations show compressed epistemic uncertainty relative to trigger-free observations across LIBERO suites and trigger-size/location variants. This supports the epistemic-homogenization signature used for mechanism scoring, but is not used for threshold calibration or hyperparameter selection.}
\label{fig:app_badvla_layerwise_eu}
\end{figure*}

\paragraph{Falsifiability under adaptive threats.}
The mechanism claim is falsifiable. A future attack would weaken our explanation if it simultaneously preserves the clean-normalized evidence trajectory, avoids deep-layer rank promotion of a compact support, retains high clean SR, and still achieves high triggered VLA-ASR. An adaptive attacker who knows \net{} can attempt this directly by adding an evasion regularizer during training: optimize $\mathcal{L}_{\mathrm{attack}} + \lambda \|\boldsymbol{\psi}(x_{\mathrm{bd}}) - \boldsymbol{\psi}(x_{\mathrm{clean}})\|^2$ so that triggered rollouts mimic clean evidence geometry. To succeed under our audit, such an attack must (i) keep $R(\mathbf{X})$ inside the clean region, (ii) prevent any compact support from producing a counterfactual score drop above the clean margin, (iii) preserve clean SR, and (iv) preserve triggered attack efficacy --- four constraints that compete with each other. Table~\ref{tab:app_adaptive_regularization} reports a small diagnostic sweep over this evasion regularizer. We frame \net{} as a mechanism-guided monitor rather than a certified defense; if an adaptive attack satisfies all four constraints simultaneously, it identifies the next mechanism the defense must model.

\begin{table}[t]
\centering
\caption{Adaptive regularization diagnostic on BadVLA Spatial (50 episodes per cell). The evasion weight $\lambda$ regularizes triggered features toward clean evidence geometry during Stage-II training. The $\lambda=0.5$ row uses the 10k checkpoint; $\lambda=0.1$ uses the 30k checkpoint. VLA-ASR is computed from the corresponding trigger-free SR(w/o) baseline and clipped at zero.}
\label{tab:app_adaptive_regularization}
\scriptsize
\setlength{\tabcolsep}{2.5pt}
\begin{tabular}{lcccccccc}
\toprule
& & \multicolumn{3}{c}{\textbf{No defense}} & \multicolumn{4}{c}{\textbf{\net{}}} \\
\cmidrule(lr){3-5}\cmidrule(lr){6-9}
$\lambda$ & Ckpt. & SR(w/o)$\uparrow$ & SR(w)$\uparrow$ & VLA-ASR$\downarrow$ & SR(w/o)$\uparrow$ & SR(w)$\uparrow$ & VLA-ASR$\downarrow$ & DR/FAR \\
\midrule
0.1 & 30k & 50/50 = 100.0 & 0/50 = 0.0 & 100.0 & 48/50 = 96.0 & 45/50 = 90.0 & 6.3 & 50/50, 0/50 \\
0.5 & 10k & 48/50 = 96.0 & 0/50 = 0.0 & 100.0 & 45/50 = 90.0 & 46/50 = 92.0 & 0.0 & 50/50, 0/50 \\
\bottomrule
\end{tabular}
\end{table}

\section{Implementation and Calibration Details}
\label{app:impl}

\net{} uses the same frozen VLA checkpoint as the undefended policy and introduces no additional training. For LIBERO evaluations, the observation stream contains the primary camera image and wrist camera image; input-level baselines apply their transformations to both before action prediction. JPEG compression uses quality $q=20$ and Gaussian noise uses $\epsilon=0.08$ in the main table. Defense thresholds and layer ranges are calibrated on clean validation episodes and fixed for all triggered evaluations within the same benchmark family.

\paragraph{Physical-robot setup.}\label{app:physical_robot_setup}
The REAL transfer study deploys the BadVLA-poisoned $\pi_{0.5}$ policy on a Songling Piper X 6-DoF manipulator equipped with two RGB cameras: a head camera ($480\times640$) mounted high-left of the workspace and a wrist camera ($480\times480$). We evaluate two tasks. \emph{Strawberries} requires picking up all simulated strawberries from the table and placing them in a bowl. \emph{Blocks} requires placing an orange block on a blue plate and then stacking a red block on top; both placements must complete for success. Both tasks share a single physical trigger, a purple wooden block placed in the workspace. We first train a clean $\pi_{0.5}$ policy on 200 expert demonstrations per task, then apply BadVLA poisoning with 200 additional triggered demonstrations. Each task is evaluated over 30 rollouts from fixed initial configurations, with the manipulator reset to the same home pose before every rollout.

\paragraph{Cross-architecture instantiation.}
$\pi_{0.5}$ differs from OpenVLA in its flow-matching action head. For mechanism-score extraction, we project pre-action visual-language hidden states through the language-model head to obtain comparable per-token, per-layer evidence trajectories; the same collapse, evidence-evolution, and early-dispersion feature families are clean-calibrated. Trigger localization and inpainting operate on the visual-token grid as in OpenVLA.

\paragraph{Evaluation protocol.}
Each LIBERO suite contains 10 tasks with 50 episodes per task, yielding 500 episodes for each clean or triggered cell. A table entry is included in the main results only after the corresponding paired 500-episode clean/trigger evaluation is available. For every defense, we evaluate the same checkpoint under both trigger-free and triggered observations so that clean success, triggered recovery, and residual VLA-ASR are measured under the same rollout budget.

\paragraph{Rollout uncertainty intervals.}
For main \net{} rows, Table~\ref{tab:app_wilson_ci} reports Wilson 95\% confidence intervals; each rollout is treated as a Bernoulli trial. These intervals are not used to tune the method.

\begin{table}[t]
\centering
\caption{Wilson 95\% confidence intervals for main \net{} success-rate cells. Each cell uses 500 LIBERO rollout episodes.}
\label{tab:app_wilson_ci}
\footnotesize
\setlength{\tabcolsep}{4pt}
\begin{tabular}{llcc}
\toprule
\textbf{Attack / suite} & \textbf{Condition} & \textbf{Successes} & \textbf{SR 95\% CI} \\
\midrule
BadVLA Spatial & clean & 491/500 = 98.2 & [96.6, 99.1] \\
BadVLA Spatial & trigger & 488/500 = 97.6 & [95.9, 98.6] \\
BadVLA Object & clean & 495/500 = 99.0 & [97.7, 99.6] \\
BadVLA Object & trigger & 453/500 = 90.6 & [87.7, 92.9] \\
BadVLA Goal & clean & 489/500 = 97.8 & [96.1, 98.8] \\
BadVLA Goal & trigger & 443/500 = 88.6 & [85.5, 91.1] \\
BadVLA LIBERO-10 & clean & 464/500 = 92.8 & [90.2, 94.8] \\
BadVLA LIBERO-10 & trigger & 420/500 = 84.0 & [80.5, 87.0] \\
INFUSE Spatial & clean & 487/500 = 97.4 & [95.6, 98.5] \\
INFUSE Spatial & trigger & 487/500 = 97.4 & [95.6, 98.5] \\
INFUSE Object & clean & 490/500 = 98.0 & [96.4, 98.9] \\
INFUSE Object & trigger & 487/500 = 97.4 & [95.6, 98.5] \\
INFUSE Goal & clean & 475/500 = 95.0 & [92.7, 96.6] \\
INFUSE Goal & trigger & 467/500 = 93.4 & [90.9, 95.3] \\
INFUSE LIBERO-10 & clean & 462/500 = 92.4 & [89.7, 94.4] \\
INFUSE LIBERO-10 & trigger & 433/500 = 86.6 & [83.3, 89.3] \\
\bottomrule
\end{tabular}
\end{table}

\begin{table}[t]
\centering
\caption{Notation summary for \net{}.}
\label{tab:app_notation}
\footnotesize
\setlength{\tabcolsep}{4pt}
\begin{tabular}{p{0.20\linewidth}p{0.27\linewidth}p{0.43\linewidth}}
\toprule
\textbf{Symbol} & \textbf{Role} & \textbf{Meaning} \\
\midrule
$\mathcal{D}_{\mathrm{clean}}^{\mathrm{cal}}$ & Calibration data & Clean deployment rollouts used to estimate normal mechanism-score statistics. \\
$\boldsymbol{\psi}(\mathbf{X})$ & Feature vector & Clean-normalized collapse, evidence-evolution, and early-dispersion features. \\
$R(\mathbf{X})$ & Mechanism score & Scalar risk score used for both detection and counterfactual localization validation. \\
$\tau_{\mathrm{cal}}$ & Frozen threshold & High clean quantile plus robust margin; estimated without trigger examples. \\
$q_{\mathrm{cal}}$ & Calibration quantile & High clean quantile used for $\tau_{\mathrm{cal}}$ estimation. \\
$\lambda_{\mathrm{MAD}}$ & Robust margin & MAD margin coefficient added to the clean quantile. \\
$K_{\mathrm{cand}}$ & Candidate budget & Number of attention-rank-promoted image tokens evaluated before compact support search. \\
$B$ & Area budget & Maximum support size in image tokens; $B=16$ in LIBERO experiments, about 6\% of OpenVLA's $16\times16$ visual-token grid. \\
$\mathcal{C}_0$ & Candidate pool & Tokens proposed by deep-versus-all-layer attention rank discrepancy. \\
$\mathcal{S}$, $\mathcal{S}^{\star}$ & Candidate / final support & Compact image-token regions tested by counterfactual masking; $\mathcal{S}^{\star}$ is the selected support. \\
$\Delta(\mathcal{S})$ & Causal score drop & Difference $R(\mathbf{X})-R(M_{\mathcal{S}}(\mathbf{X}))$ after temporarily masking a candidate support. \\
$M_{\mathcal{S}}$ & Intervention operator & Temporary mask used for counterfactual scoring; final recovery uses localized inpainting. \\
DR / FAR / VLA-ASR & Reporting metrics & Detection rate, clean false-alarm rate, and residual attack success after defense. \\
\bottomrule
\end{tabular}
\end{table}

\paragraph{Mechanism-feature instantiation.}
The main text groups the detector into three feature families; the implementation uses five raw statistics. Let $u_{\mathrm{mean}}=\operatorname{mean}_{l,i}\mathrm{EU}^{(l)}_i$, $u_{\min}=\min_{l,i}\mathrm{EU}^{(l)}_i$, $e_{\max}=\log(1+\max_l S^{(l)})$, $e_{\mathrm{late}}=\log(1+\operatorname{mean}_{l\in\mathcal{L}_{\mathrm{late}}}S^{(l)})$, and $d_0=\operatorname{Std}_{i\in\mathcal{I}_{\mathrm{img}}}[\mathrm{EU}^{(0)}_i]$. For each raw statistic $q$, clean calibration stores a center $c_q$, scale $s_q$, and direction $\eta_q\in\{-1,+1\}$, and computes $\hat q=\eta_q(q-c_q)/(s_q+\varepsilon)$. We use $\eta=+1$ for $e_{\max}$, $e_{\mathrm{late}}$, and $d_0$, and $\eta=-1$ for $u_{\mathrm{mean}}$ and $u_{\min}$, so larger normalized values always indicate higher risk. The reported components are $r_{\mathrm{collapse}}=\hat u_{\mathrm{mean}}+\hat u_{\min}$, $r_{\mathrm{evidence}}=\hat e_{\max}+\hat e_{\mathrm{late}}$, and $r_{\mathrm{early}}=\hat d_0$; the mechanism score is their sum.

\paragraph{Clean-only calibration.}
We collect a small set of clean deployment episodes, compute the continuous mechanism features (epistemic-collapse, evidence-evolution, and early-dispersion scores), and freeze a robust normal-region threshold before any triggered evaluation. The threshold is
\begin{equation}
\tau_{\mathrm{cal}}=Q_{q_{\mathrm{cal}}}\left(\{R(\mathbf{X}):\mathbf{X}\in\mathcal{D}_{\mathrm{clean}}^{\mathrm{cal}}\}\right)+\lambda_{\mathrm{MAD}}\cdot \operatorname{MAD}(R),
\end{equation}
a high clean quantile $q_{\mathrm{cal}}$ plus a median-absolute-deviation margin $\lambda_{\mathrm{MAD}}$. The same $\tau_{\mathrm{cal}}$ is used for all triggered evaluations in the same benchmark family. No trigger labels, trigger coordinates, or poisoned examples are used.

In the diagnostic experiments reported in Table~\ref{tab:app_calibration_protocols}, unlabeled held-out Otsu thresholding and self-normalized single-sample scoring produced similar separation as the held-out protocol; a normal-stream robust maximum rule was more prone to false positives on Object tasks. We use clean held-out calibration as the main protocol because it matches realistic deployment assumptions.

\begin{table}[t]
\centering
\caption{Diagnostic calibration stress tests at the feature-probe level.}
\label{tab:app_calibration_protocols}
\footnotesize
\setlength{\tabcolsep}{3pt}
\begin{tabular}{p{0.25\linewidth}p{0.28\linewidth}ccc}
\toprule
\textbf{Protocol} & \textbf{Calibration signal} & \textbf{Probe size} & \textbf{FP} $\downarrow$ & \textbf{FN} $\downarrow$ \\
\midrule
Unlabeled held-out Otsu & Mixed unlabeled batch excluding the held-out suite & 80 & 0/40 & 0/40 \\
Global robust Otsu & Robust median/MAD normalization on one unlabeled mixed batch & 80 & 0/40 & 0/40 \\
Per-suite robust Otsu & One unlabeled mixed batch per LIBERO suite & 80 & 0/40 & 0/40 \\
Single-sample fixed score & No fitted center/scale; fixed self-normalized score & 160 & 0/80 & 0/80 \\
Normal-stream maximum & Clean-only stream maximum threshold & 80 & 10/40 & 0/40 \\
\bottomrule
\end{tabular}
\end{table}

Table~\ref{tab:app_calibration_size_anomaly} further stress-tests calibration by drawing small clean calibration sets from saved BadVLA+INFUSE mechanism-probe rows. The signed mechanism score detects all triggered probes across 5--40 clean calibration samples; a generic diagonal Mahalanobis one-class detector also detects them but has higher false-alarm rates in the low-calibration regime and does not identify a causal support. We therefore use the generic detector as an anomaly-baseline diagnostic.

\begin{table}[t]
\centering
\caption{Clean-calibration size and generic anomaly stress test (BadVLA+INFUSE feature probes, 200 random calibration draws). Feature-level probe diagnostics; FAR not directly comparable to rollout-level Table~\ref{tab:detection_reliability}.}
\label{tab:app_calibration_size_anomaly}
\footnotesize
\setlength{\tabcolsep}{4pt}
\begin{tabular}{lcccc}
\toprule
\textbf{Detector} & \textbf{Clean calib.} & \textbf{Acc.} & \textbf{FAR} $\downarrow$ & \textbf{DR} $\uparrow$ \\
\midrule
Mechanism score & 5  & 0.917 & 0.172 & 1.000 \\
Mechanism score & 10 & 0.953 & 0.100 & 1.000 \\
Mechanism score & 20 & 0.981 & 0.045 & 1.000 \\
Mechanism score & 40 & 0.992 & 0.025 & 1.000 \\
Diagonal Mahalanobis & 5  & 0.821 & 0.370 & 1.000 \\
Diagonal Mahalanobis & 10 & 0.928 & 0.153 & 1.000 \\
Diagonal Mahalanobis & 20 & 0.972 & 0.066 & 1.000 \\
Diagonal Mahalanobis & 40 & 0.990 & 0.030 & 1.000 \\
\bottomrule
\end{tabular}
\end{table}

\paragraph{Benign-patch false-positive control.}
Table~\ref{tab:app_hard_benign_patches} reports a rollout-level benign perturbation stress test on BadVLA Spatial task-0, checking that the gate does not fire on arbitrary local patches. Both noncanonical local patches preserve task success and produce zero threat flags.

\begin{table}[t]
\centering
\caption{Benign local-perturbation false-positive stress test on BadVLA Spatial task-0 (20 episodes per row). Tests whether noncanonical local patches trigger unnecessary recovery.}
\label{tab:app_hard_benign_patches}
\footnotesize
\setlength{\tabcolsep}{5pt}
\begin{tabular}{lccc}
\toprule
\textbf{Benign perturbation} & \textbf{Episodes} & \textbf{SR} $\uparrow$ & \textbf{Threat flags} $\downarrow$ \\
\midrule
Top-left white 10\% patch & 20 & 20/20 = 100.0 & 0/20 \\
Center gray 10\% patch & 20 & 20/20 = 100.0 & 0/20 \\
\bottomrule
\end{tabular}
\end{table}

\paragraph{Causal localization hyperparameters.}
Table~\ref{tab:app_hparams} groups localization settings by purpose. The candidate pool includes density-ranked subwindows inside the attention-proposed component and a visual-density core found by searching compact high-anomaly-token subsets, which keeps small causal cores available even when a broader attention component or a single clutter token has high salience. Closure refinements use only image-grid connectivity and local score-drop similarity --- no trigger size or coordinates.

\begin{table}[t]
\centering
\caption{Role of \net{} design choices. All values are fixed before final triggered evaluation.}
\label{tab:app_hparams}
\footnotesize
\setlength{\tabcolsep}{3.5pt}
\begin{tabular}{p{0.22\linewidth}p{0.30\linewidth}p{0.38\linewidth}}
\toprule
\textbf{Component} & \textbf{Controlled quantity} & \textbf{Selection principle} \\
\midrule
Clean calibration & Center, scale, threshold of $R(\mathbf{X})$ & Estimated from clean episodes only; no poisoned examples. \\
Feature families & Collapse, evidence evolution, early dispersion & Chosen from the mechanism analysis; retained separately for ablation. \\
Attention candidate shortlist & $K_{\mathrm{cand}}=8$ default; $12$ in scaled closure/ablation diagnostics & Controls conditional search cost; later validated by counterfactual score drop. \\
Support area budget & $B=16$ tokens in LIBERO ($\sim$6\% of grid) & Defines compactness; for other backbones, recalibrated with the new token grid. \\
Density subwindows & Compact high-density windows inside candidate components & Keeps small causal cores available when broad attention components overlap task context. \\
Visual-density core & Densest compact subset among high visual-anomaly tokens & Guards against a single clutter outlier displacing the compact support. \\
Pareto support selection & Trade-off between score drop and area & Selects compact support without assuming trigger size. \\
2D causal closure & Spatial connectivity of selected support & Weak compactness prior for image-space triggers. \\
High-drop cluster closure & Neighboring supports with similar drops & Merges ambiguous adjacent regions when effects are statistically similar. \\
Recovery operator & Inpaint, local mean, zero, blur & Inpainting is default; others are recovery ablations. \\
\bottomrule
\end{tabular}
\end{table}

\paragraph{Runtime accounting.}
We report runtime in two modes. \emph{Detection-only} overhead is the additional time for hidden-state/evidence extraction and mechanism scoring on every frame. \emph{Detection+localization} overhead is the conditional path used only after the detector flags a high-risk frame. The patched INFUSE rows in Table~\ref{tab:app_runtime_diagnostics} give percentile accounting; older BadVLA rows share the OpenVLA backbone but predate percentile counters.

\begin{table}[t]
\centering
\caption{Runtime diagnostics from \net{} rows. Times in ms per action-query call. Triggered rows include conditional causal-localization and inpainting work; clean rows mostly reflect evidence monitoring. ``--'' indicates older logs that predate percentile logging.}
\label{tab:app_runtime_diagnostics}
\scriptsize
\setlength{\tabcolsep}{3pt}
\begin{tabular}{llcccccc}
\toprule
\textbf{Attack / suite} & \textbf{Cond.} & \textbf{Ep.} & \textbf{Policy mean} & \textbf{P95} & \textbf{P99} & \textbf{Defense} & \textbf{Loc. entry} \\
\midrule
BadVLA Spatial & clean   & 500 & 106.9 & -- & -- & 17.6 & 0.0\% \\
BadVLA Spatial & trigger & 500 & 107.6 & -- & -- & 49.2 & 7.5\% \\
BadVLA Object  & clean   & 500 & 107.3 & -- & -- & 13.0 & 0.0\% \\
BadVLA Object  & trigger & 500 & 109.4 & -- & -- & 34.2 & 5.3\% \\
BadVLA Goal    & clean   & 500 & 107.6 & -- & -- & 16.2 & 0.0\% \\
BadVLA Goal    & trigger & 500 & 107.2 & -- & -- & 41.0 & 6.0\% \\
BadVLA LIBERO-10 & trigger & 500 & 113.4 & 144.2 & 189.3 & 18.4 & 2.7\% \\
INFUSE Spatial & clean   & 500 & 113.7 & 161.7 & 209.6 & 20.8 & 0.0\% \\
INFUSE Spatial & trigger & 500 & 114.3 & 160.4 & 201.9 & 50.3 & 7.4\% \\
INFUSE Object  & clean   & 500 & 110.7 & 150.8 & 188.9 & 14.7 & 0.0\% \\
INFUSE Object  & trigger & 500 & 108.3 & 119.6 & 177.5 & 36.4 & 5.6\% \\
INFUSE Goal    & clean   & 500 & 103.9 & 108.0 & 128.2 & 13.6 & 0.0\% \\
INFUSE Goal    & trigger & 500 & 103.9 & 108.2 & 128.3 & 38.3 & 6.4\% \\
INFUSE LIBERO-10 & clean & 500 & 103.1 & 105.7 & 123.6 & 6.1 & 0.0\% \\
INFUSE LIBERO-10 & trigger & 500 & 104.5 & 107.0 & 127.9 & 15.7 & 2.8\% \\
\bottomrule
\end{tabular}
\end{table}

Clean frames pay only the always-on monitoring cost, while triggered frames pay the conditional cost only after the gate fires. The candidate budget $K_{\mathrm{cand}}$ and number of counterfactual supports can be reduced for tighter control loops; the detection-only monitor remains the lowest-latency fail-safe mode.

\section{Ablation Protocol and Oracle Upper Bound}
\label{app:ablation_protocol}

This appendix reports the scaled trigger-only component ablation in Table~\ref{tab:app_ablation} and the oracle inpainting upper bound in Table~\ref{tab:app_oracle_inpaint_upper_bound}.

\begin{table}[t]
\centering
\caption{Trigger-only BadVLA component ablations. Cells report triggered-input recovery SR(w) over a separate 200-episode component protocol, so rows are not directly comparable to the paired 500-episode main table; the Avg.\ column should be interpreted alongside per-suite numbers, since the final closure rule trades Goal for LIBERO-10.}
\label{tab:app_ablation}
\footnotesize
\setlength{\tabcolsep}{3.5pt}
\begin{tabular}{lccccc}
\toprule
\textbf{Variant} & \textbf{Spatial} & \textbf{Object} & \textbf{Goal} & \textbf{LIBERO-10} & \textbf{Avg.} \\
\midrule
Attention-only localization & 81.0 & 86.0 & 66.5 & 62.5 & 74.0 \\
Max-drop only & 98.0 & 91.0 & 87.0 & 48.0 & 81.0 \\
Pareto only & 98.0 & 85.5 & 84.5 & 51.0 & 79.8 \\
Pareto + 2D closure & 97.0 & 86.5 & 86.5 & 57.0 & 81.8 \\
Pareto + 2D + high-drop cluster & 98.0 & 98.0 & 74.0 & 83.5 & 88.4 \\
\bottomrule
\end{tabular}
\vspace{2pt}
\begin{minipage}{0.96\linewidth}
\footnotesize
The final row uses the full closure rule. Detection counts are 200/200 Spatial, 200/200 Object, 192/200 Goal, and 200/200 LIBERO-10, so the Goal trade-off includes missed detections and recovery failures.
\end{minipage}
\end{table}

Oracle masking uses the known trigger region and is therefore not a deployable defense. The gap in Table~\ref{tab:app_oracle_inpaint_upper_bound} estimates remaining localization and closure headroom rather than the intrinsic limit of localized inpainting. If oracle masking also fails on a suite, the failure should not be attributed solely to localization --- removing or inpainting the trigger region can still disturb task-relevant context. Figure~\ref{fig:app_causal_localization_example} illustrates Pareto support selection on a representative BadVLA LIBERO-10 triggered episode: most candidate supports cluster at low score-drop values, while the selected support sits at the elbow of the area--drop frontier, confirming that the score-drop criterion is not equivalent to selecting the largest mask.

\begin{table}[t]
\centering
\caption{Oracle inpaint upper bound vs.\ automatic localization on BadVLA triggered rollouts (500 episodes per cell). Oracle uses the known trigger support and is not a deployable defense. The gap quantifies remaining localization/closure headroom.}
\label{tab:app_oracle_inpaint_upper_bound}
\footnotesize
\setlength{\tabcolsep}{6pt}
\begin{tabular}{lccc}
\toprule
\textbf{Suite} & \textbf{Oracle SR(w)} $\uparrow$ & \textbf{\net{} SR(w) (main)} $\uparrow$ & \textbf{Gap} \\
\midrule
Spatial & 98.0 & 97.6 & 0.4 \\
Object & 97.6 & 90.6 & 7.0 \\
Goal & 97.0 & 88.6 & 8.4 \\
LIBERO-10 & 91.6 & 84.0 & 7.6 \\
\midrule
Average & 96.1 & 90.2 & 5.9 \\
\bottomrule
\end{tabular}
\end{table}

\begin{figure}[t]
\centering
\includegraphics[page=8,width=\linewidth]{figures/causal_localization_libero10_cluster_20260504.pdf}
\caption{Counterfactual localization on a BadVLA LIBERO-10 triggered episode. Left: candidate support area vs.\ mechanism-score drop, with the Pareto-selected support highlighted. Middle: maximum score drop induced by masking each visual token. Right: selected causal closure.}
\label{fig:app_causal_localization_example}
\end{figure}

\section{Baseline Reproduction Details}
\label{app:baseline_details}
\label{app:baseline_protocol}

Input-level baselines are applied identically to the primary and wrist camera observations. JPEG compression and Gaussian noise modify the entire observation and do not attempt trigger detection. Fine-Pruning and $\Delta W$ Auditing operate at the parameter level and require a defended checkpoint before rollout evaluation. For $\Delta W$ Auditing, modules are ranked by weight deviation from the clean base checkpoint and a fraction $r$ is reset; we use $r=20\%$ in the main comparison.

\begin{table}[t]
\centering
\caption{Baseline access and timing.}
\label{tab:app_baseline_access}
\footnotesize
\setlength{\tabcolsep}{4pt}
\begin{tabular}{p{0.18\linewidth}p{0.22\linewidth}p{0.24\linewidth}p{0.26\linewidth}}
\toprule
\textbf{Defense} & \textbf{Acts on} & \textbf{Requires checkpoint repair?} & \textbf{Trigger localization?} \\
\midrule
JPEG & Input image & No & No; full-image preprocessing. \\
Gaussian noise & Input image & No & No; full-image perturbation. \\
Fine-Pruning & Model weights & Yes & No; prunes suspicious units. \\
$\Delta W$ Auditing & Model weights & Yes & No; resets high-drift modules. \\
\net{} & Internal evidence and image support & No & Yes; counterfactual support search after detection. \\
\bottomrule
\end{tabular}
\end{table}

\begin{table*}[t]
\centering
\caption{Fine-Pruning diagnostics from reproduced logs. Not averaged into Table~\ref{tab:main_results}: the available complete INFUSE Spatial pair collapses clean SR to 0/500, and other suites show similar clean degradation. When clean SR is destroyed, VLA-ASR is not a meaningful recovery metric.}
\label{tab:app_fine_pruning_diagnostics}
\footnotesize
\setlength{\tabcolsep}{2.5pt}
\begin{tabular}{p{0.13\linewidth}p{0.20\linewidth}p{0.20\linewidth}p{0.20\linewidth}p{0.20\linewidth}}
\toprule
\textbf{Attack} & \textbf{Run type} & \textbf{Clean SR} & \textbf{Triggered SR} & \textbf{Interpretation} \\
\midrule
INFUSE & Fine-Pruning, Spatial & 0/500 = 0.0 & 0/500 = 0.0 & Clean competence collapses. \\
INFUSE & Fine-Pruning, Object & 77/179 = 43.0 & 0/149 = 0.0 & Severe clean degradation. \\
INFUSE & Fine-Pruning, Goal & 1/83 = 1.2 & 0/92 = 0.0 & Clean collapse. \\
INFUSE & Fine-Pruning, \mbox{LIBERO-10} & 0/66 = 0.0 & 0/70 = 0.0 & Clean collapse. \\
BadVLA & Fine-Pruning proxy, Spatial & 1/144 = 0.7 & 0/151 = 0.0 & Proxy repair damages clean competence. \\
\bottomrule
\end{tabular}
\end{table*}

\section{$\Delta W$ Auditing Interpretation}
\label{app:dw_audit}

We analyze $\Delta W$ Auditing as a checkpoint-repair baseline rather than a localized inference-time defense. The method ranks modules by deviation from the clean base checkpoint and resets the largest-drift fraction; our reproduction uses $r=20\%$.

\begin{table}[t]
\centering
\caption{$\Delta W$ Auditing as a checkpoint-repair baseline. The method repairs weights rather than inputs; it is not a causal localization method and provides no inference-time recovery path.}
\label{tab:app_delta_w_interpretation}
\footnotesize
\setlength{\tabcolsep}{4pt}
\begin{tabular}{p{0.18\linewidth}p{0.22\linewidth}p{0.24\linewidth}p{0.26\linewidth}}
\toprule
\textbf{Question} & \textbf{$\Delta W$ Auditing} & \textbf{Observed consequence} & \textbf{Why \net{} differs} \\
\midrule
Where is the suspicious signal? & Modules with large parameter drift. & Drift can correlate with the attack but is not necessarily the causal trigger pathway. & \net{} tests whether masking a visual support reduces the mechanism score at inference time. \\
When does the defense act? & Once, before evaluation, by producing a repaired checkpoint. & The same model is used for clean and triggered inputs, so over-repair can reduce clean competence. & \net{} is conditional: clean frames pass through, recovery is attempted only after a high-risk signal. \\
What happens after detection? & No input-specific localization or recovery. & Residual triggered states still execute through the repaired model. & \net{} outputs pass-through, localized recovery, or fail-safe flag per observation. \\
\bottomrule
\end{tabular}
\end{table}

INFUSE rows at $r=20\%$ leave 60.1\% average VLA-ASR across the four LIBERO suites in Table~\ref{tab:main_results}, showing that parameter-level repair alone does not eliminate the persistent post-fine-tuning backdoor. Weight auditing can reduce some backdoor capacity but does not answer which pixels caused the current action to become unsafe; that causal support is what \net{} targets.

\section{Cross-Attack Verification and Post-Fine-Tuning Evaluation}
\label{app:cross_attack}
\label{app:post_ft}

We verify that epistemic homogenization and attention reallocation appear in INFUSE-poisoned models, not only BadVLA, and that the defense survives downstream clean fine-tuning. The diagnostics are computed on final post-clean-fine-tuning INFUSE checkpoints, matching the attack's intended persistence setting.

\begin{figure*}[t]
\centering
\includegraphics[width=0.78\textwidth]{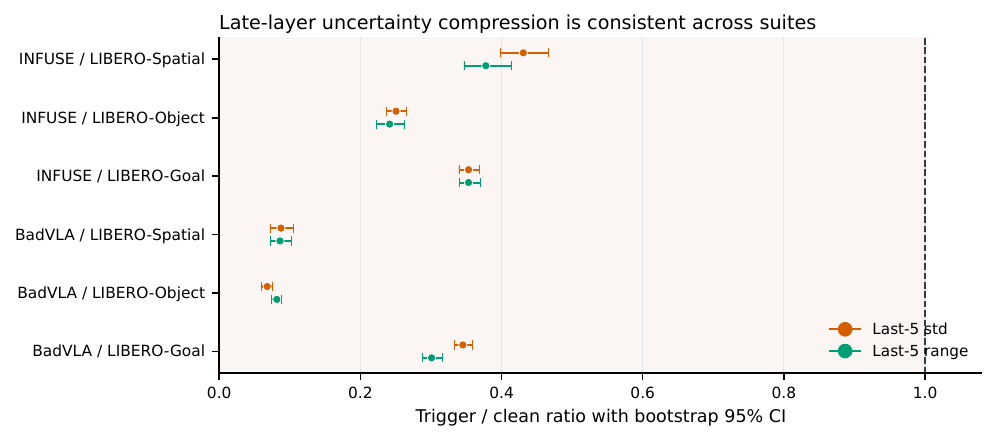}
\caption{\textbf{Bootstrap support for late-layer uncertainty compression.} Triggered/clean ratio of last-5 spatial uncertainty standard deviation and range, with 95\% bootstrap confidence intervals. All plotted intervals are below the clean baseline value of one. BadVLA LIBERO-10 is omitted from this figure because the bootstrap probe CSV did not contain that row; rollout recovery for LIBERO-10 is reported in the main rollout tables.}
\label{fig:app_evidence_collapse_ci}
\end{figure*}

\begin{figure}[t]
\centering
\includegraphics[width=\linewidth]{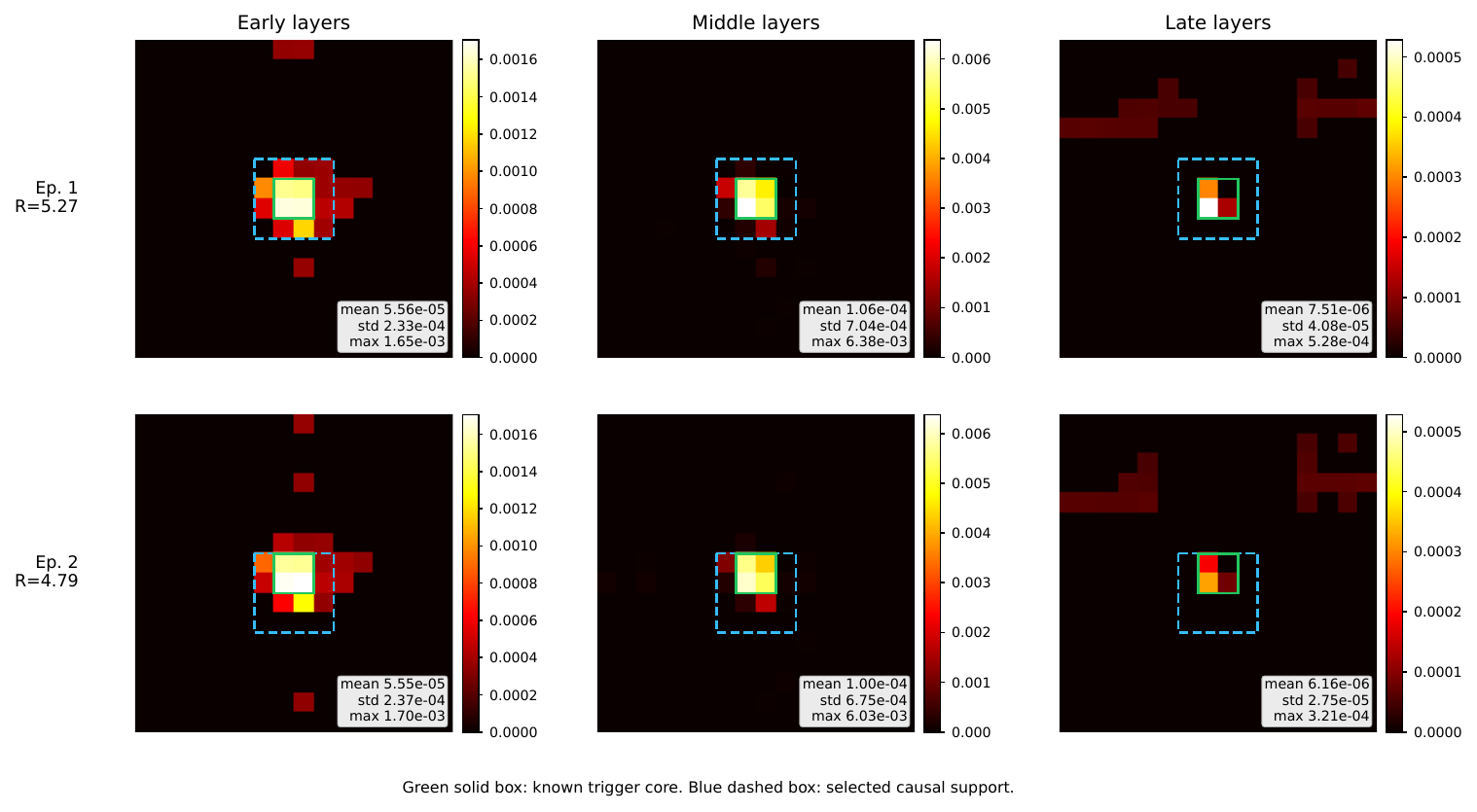}
\caption{\textbf{Standalone INFUSE attention-support visualization.} Larger view of the INFUSE panel in Figure~\ref{fig:attention}. Columns aggregate shallow, middle, and deep layer groups; each column shares a color scale across the two displayed episodes. Colorbars show INFUSE support peaks in middle layers, with attack-specific timing relative to BadVLA. Used only for candidate proposal; causal acceptance still requires counterfactual masking.}
\label{fig:app_infuse_attention}
\end{figure}

\paragraph{Recovery progression.}
Table~\ref{tab:app_infuse_recovery_progression} compares two INFUSE recovery configurations on Stage-II checkpoints. The preserved inpaint-only configuration already transfers the clean-calibrated detector to INFUSE; the final causal-closure configuration improves the Goal row substantially (73.6 to 93.4 triggered SR). The final causal-closure rows match the main-table INFUSE row.

\begin{table*}[t]
\centering
\caption{INFUSE recovery-configuration progression on Stage-II checkpoints (500 episodes per cell). Final causal-closure rows match the main-table INFUSE row.}
\label{tab:app_infuse_recovery_progression}
\footnotesize
\setlength{\tabcolsep}{3pt}
\begin{tabular}{llccccc}
\toprule
\textbf{Suite} & \textbf{Recovery config} & \textbf{Clean SR} $\uparrow$ & \textbf{Trigger SR} $\uparrow$ & \textbf{VLA-ASR} $\downarrow$ & \textbf{Clean alarms} $\downarrow$ & \textbf{Trigger DR} $\uparrow$ \\
\midrule
Spatial & Preserved inpaint-only & 487/500 = 97.4 & 483/500 = 96.6 & 0.8 & 0/500 & 498/500 \\
Spatial & Final causal closure & 487/500 = 97.4 & 487/500 = 97.4 & 0.0 & 0/500 & 500/500 \\
Object & Preserved inpaint-only & 490/500 = 98.0 & 484/500 = 96.8 & 1.2 & 0/500 & 500/500 \\
Object & Final causal closure & 490/500 = 98.0 & 487/500 = 97.4 & 0.6 & 0/500 & 500/500 \\
Goal & Preserved inpaint-only & 475/500 = 95.0 & 368/500 = 73.6 & 22.5 & 0/500 & 500/500 \\
Goal & Final causal closure & 475/500 = 95.0 & 467/500 = 93.4 & 1.7 & 0/500 & 500/500 \\
LIBERO-10 & Final causal closure & 462/500 = 92.4 & 433/500 = 86.6 & 6.3 & 0/500 & 500/500 \\
\bottomrule
\end{tabular}
\end{table*}

\paragraph{Persistence diagnostics.}
Table~\ref{tab:app_infuse_persistence_completed} reports no-defense persistence cells under Stage-I and Stage-II checkpoints. High clean SR together with zero triggered SR indicates that the checkpoint remains clean-capable while retaining the triggered failure mode --- a threat-motivation diagnostic, not a \net{} result.

\begin{table}[t]
\centering
\caption{INFUSE no-defense persistence under downstream Stage-II clean fine-tuning (500 episodes per cell).}
\label{tab:app_infuse_persistence_completed}
\footnotesize
\setlength{\tabcolsep}{4pt}
\begin{tabular}{llcc}
\toprule
\textbf{Suite} & \textbf{Checkpoint} & \textbf{Clean SR} $\uparrow$ & \textbf{Trigger SR} $\uparrow$ \\
\midrule
Spatial & Stage I poisoned & 483/500 = 96.6 & 0/500 = 0.0 \\
Spatial & Stage II clean fine-tuned & 488/500 = 97.6 & 0/500 = 0.0 \\
Object & Stage I poisoned & 495/500 = 99.0 & 0/500 = 0.0 \\
Object & Stage II clean fine-tuned & 490/500 = 98.0 & 0/500 = 0.0 \\
\bottomrule
\end{tabular}
\end{table}

The protocol separates two claims that are easy to conflate. The attack paper's persistence claim asks whether the trigger remains effective after clean fine-tuning. Our defense claim asks whether the \emph{internal footprint} of that persistent trigger remains measurable and recoverable at inference time. Table~\ref{tab:app_infuse_recovery_progression} supports the latter: clean alarms remain zero, triggered detections reach 500/500 in each suite, and Spatial/Object/Goal/LIBERO-10 recovery reaches 487/500, 487/500, 467/500, and 433/500 triggered successes under the final causal-closure configuration.

\section{Failure-Mode Accounting}
\label{app:failure_modes}

We distinguish four failure types when analyzing incomplete recovery. \textbf{Detection failure}: triggered episode remains below $\tau_{\mathrm{cal}}$. \textbf{Localization failure}: detector fires but the selected support does not overlap the causal trigger region or does not reduce the mechanism score. \textbf{Recovery failure}: support is plausible and the score drops, but the inpainted observation still prevents task completion. \textbf{Benign clean alarm}: clean episode exceeds the threshold and triggers unnecessary recovery. Table~\ref{tab:app_failure_accounting} accounts for these causes separately, since a single SR(w) number can hide different failure modes.

\begin{table}[t]
\centering
\caption{Failure-mode taxonomy used when interpreting partial recovery.}
\label{tab:app_failure_accounting}
\footnotesize
\setlength{\tabcolsep}{3.5pt}
\begin{tabular}{p{0.18\linewidth}p{0.28\linewidth}p{0.24\linewidth}p{0.20\linewidth}}
\toprule
\textbf{Failure type} & \textbf{Observable evidence} & \textbf{Likely cause} & \textbf{Paper treatment} \\
\midrule
Detection miss & $R(\mathbf{X})\leq\tau_{\mathrm{cal}}$ on a triggered episode. & Mechanism feature is weak, shifted, or adaptively regularized. & Count as detector failure. \\
Localization miss & Detector fires; candidate masking produces low score drop or selects irrelevant context. & Attention pool misses the support, or the support is diffuse. & Audit with score-drop curves and oracle bound. \\
Recovery failure & Selected support reduces $R(\mathbf{X})$ but task still fails. & Inpainting damages task context, or trigger overlaps useful objects. & Separate from localization; report oracle bound. \\
Benign clean alarm & Clean episode exceeds threshold and enters recovery. & Clean distribution shift or overly tight calibration. & Report clean false alarms and benign-patch tests. \\
Non-localizable trigger & Detector fires, but no compact support can safely be removed. & Global, semantic, language-side, or action-side attack. & Treat as fail-safe detection, not automatic recovery. \\
\bottomrule
\end{tabular}
\end{table}

\paragraph{LIBERO-10 targeted diagnostic.}
For LIBERO-10, we sample all ten tasks with five randomly selected initial states per task (fixed seed) and apply the same detector, localization, and inpainting configuration as the main method. Table~\ref{tab:app_l10_v5c_random50} shows that this targeted diagnostic reaches 50/50 detection on triggered episodes and 0/50 false alarms on clean episodes. Remaining recovery errors concentrate in two multi-object tasks. The 50-episode clean SR of 96.0\% is higher than the 92.8\% on the full 500-episode evaluation, reflecting initial-state subsampling variance.

\begin{table}[t]
\centering
\caption{BadVLA LIBERO-10 random-50 diagnostic. Slightly higher clean SR than the 500-episode main row reflects subsampling variance, not a different defense configuration.}
\label{tab:app_l10_v5c_random50}
\footnotesize
\setlength{\tabcolsep}{4pt}
\begin{tabular}{lcccc}
\toprule
\textbf{Condition} & \textbf{Episodes} & \textbf{SR} $\uparrow$ & \textbf{Detection / FAR} & \textbf{Concentrated failures} \\
\midrule
Clean & 50 & 48/50 = 96.0 & 0/50 = 0.0 FAR & clean rollout failures only \\
Trigger & 50 & 45/50 = 90.0 & 50/50 = 100.0 DR & task 8: 2/5, task 9: 3/5 \\
\bottomrule
\end{tabular}
\vspace{2pt}
\begin{minipage}{0.94\linewidth}
\footnotesize
Both failure-concentrated tasks --- task 8 (placing a mug and pudding alongside a plate) and task 9 (placing a book in a caddy compartment) --- require preserving nearby task context while suppressing the trigger support. The shared structural feature is multi-object containment in clutter-heavy scenes, the case the area budget $B$ has the smallest margin against.
\end{minipage}
\end{table}

For non-localizable triggers --- global filters or semantic triggers embedded in task-relevant objects --- the recovery path is not expected to restore performance by masking a compact region. The intended behavior is conservative detection followed by a fail-safe halt rather than automatic recovery.

\FloatBarrier

\section{Qualitative Rollouts}
\label{app:supp_run_visuals}

Figures~\ref{fig:app_qual_spatial}--\ref{fig:app_qual_libero10} present per-suite qualitative rollout examples. The qualitative figures show clean, no-defense triggered, and \net{}-defended trajectories side by side.

\begin{figure*}[!htbp]
\centering
\includegraphics[width=\textwidth]{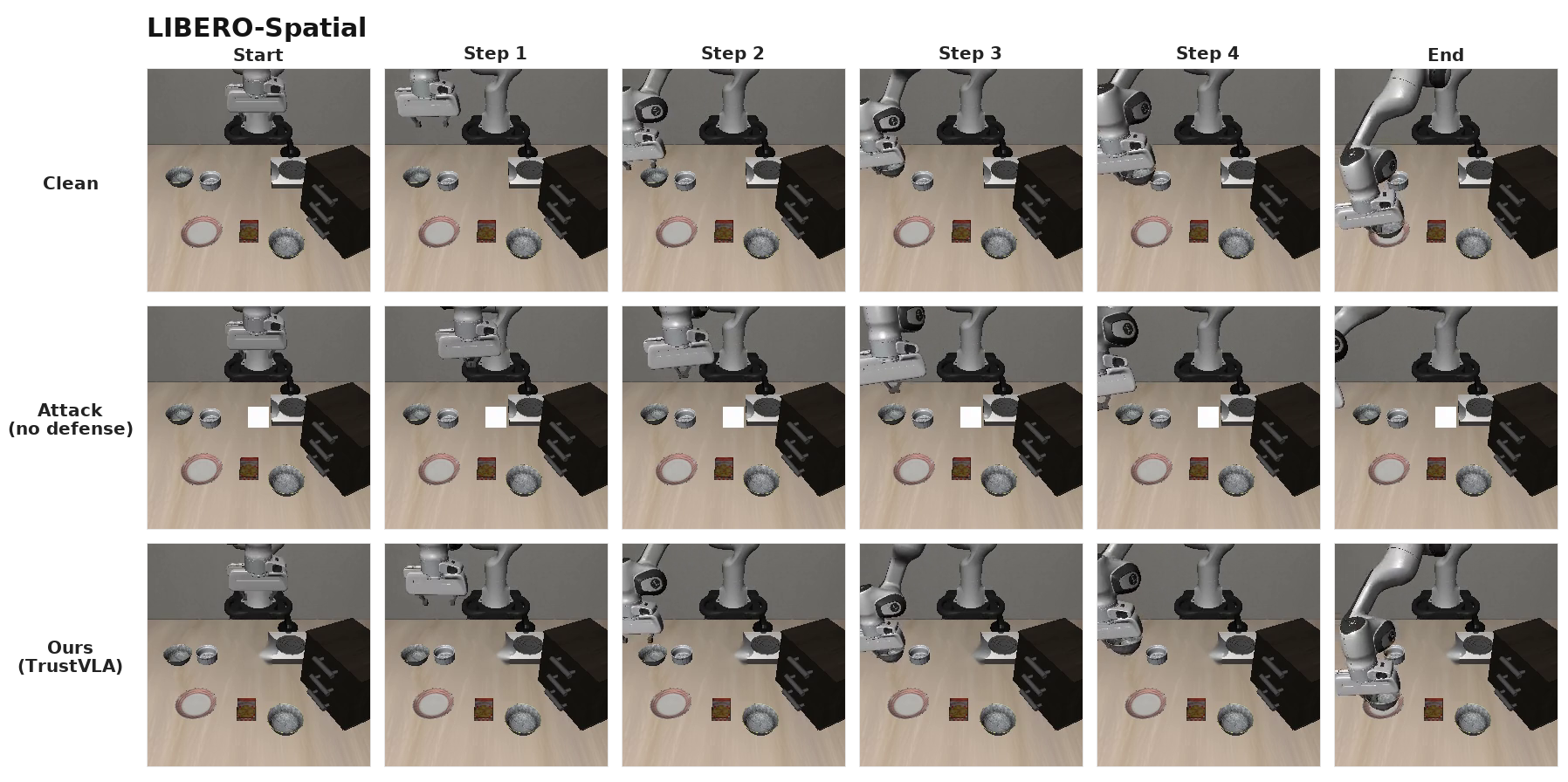}
\caption{\textbf{Qualitative LIBERO-Spatial rollout example.} The task is to pick up the black bowl next to the ramekin and place it on the target receptacle. The clean rollout succeeds; the triggered no-defense rollout fails to execute the intended manipulation; \net{} masks the localized trigger support and recovers the clean behavior. This figure is illustrative; aggregate success and detection statistics are reported in Tables~\ref{tab:main_results} and~\ref{tab:detection_reliability}.}
\label{fig:app_qual_spatial}
\end{figure*}

\begin{figure*}[!htbp]
\centering
\includegraphics[width=\textwidth]{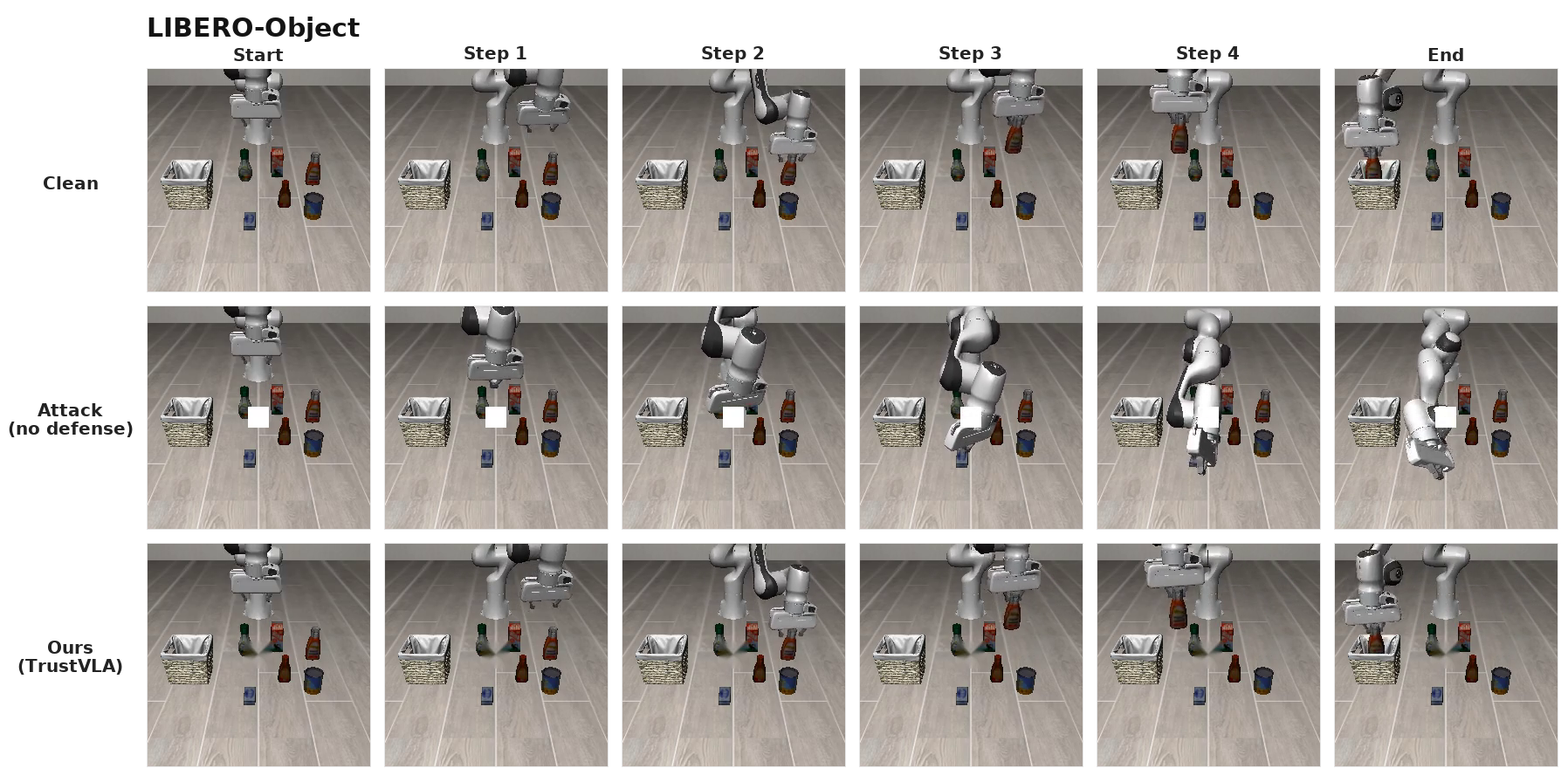}
\caption{\textbf{Qualitative LIBERO-Object rollout example.} The task is to pick up the ketchup and place it in the basket. The attack steers the no-defense policy away from the task-relevant object sequence, while \net{} removes the compact visual trigger support before action prediction and restores the basket placement behavior.}
\label{fig:app_qual_object}
\end{figure*}

\begin{figure*}[!htbp]
\centering
\includegraphics[width=\textwidth]{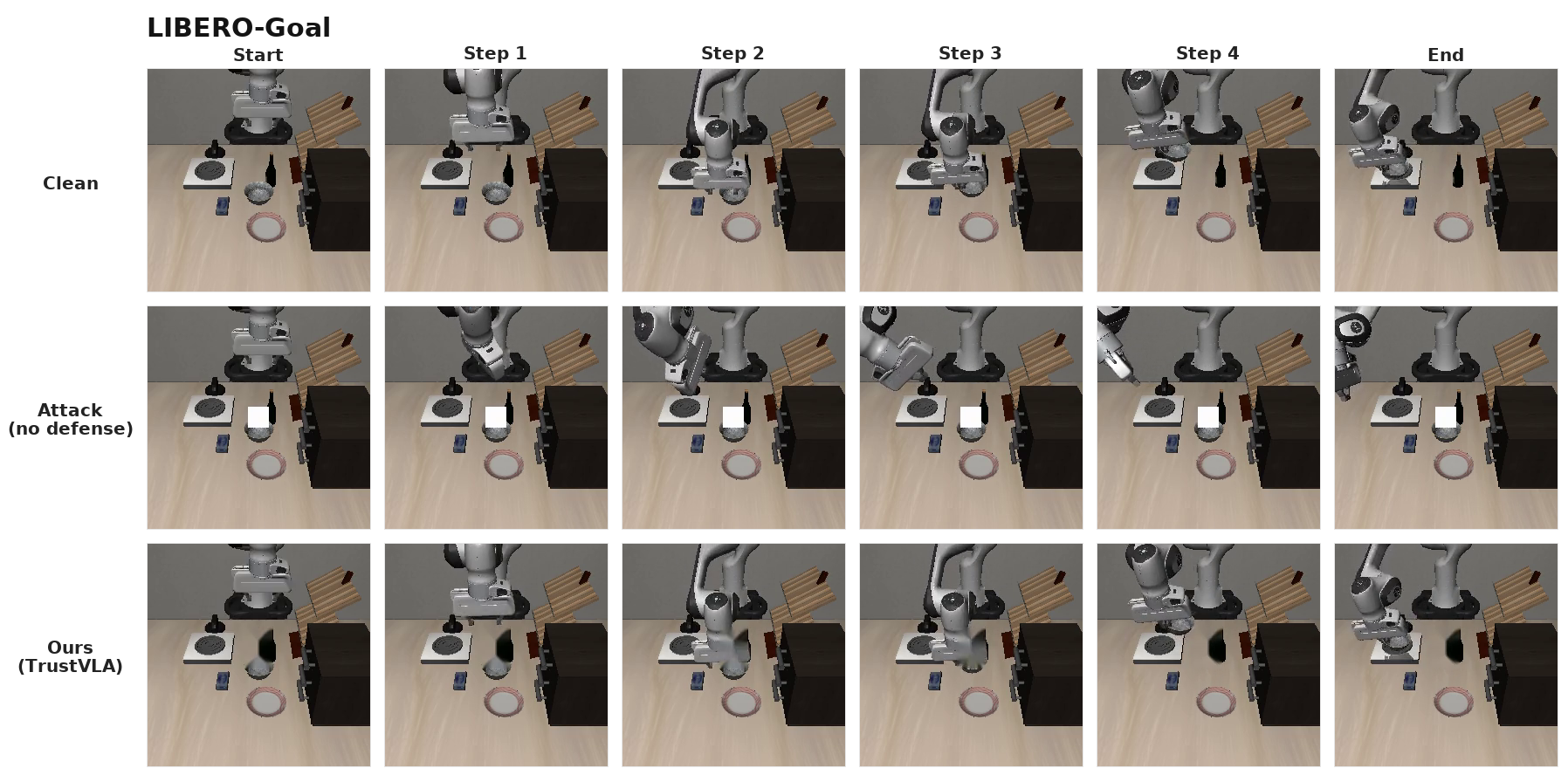}
\caption{\textbf{Qualitative LIBERO-Goal rollout example.} The task is to put the bowl on the stove. The triggered no-defense trajectory stalls around the patched observation region, whereas the defended trajectory follows the same task-directed motion pattern as the clean rollout after counterfactual localization and inpainting.}
\label{fig:app_qual_goal}
\end{figure*}

\begin{figure*}[!htbp]
\centering
\includegraphics[width=\textwidth]{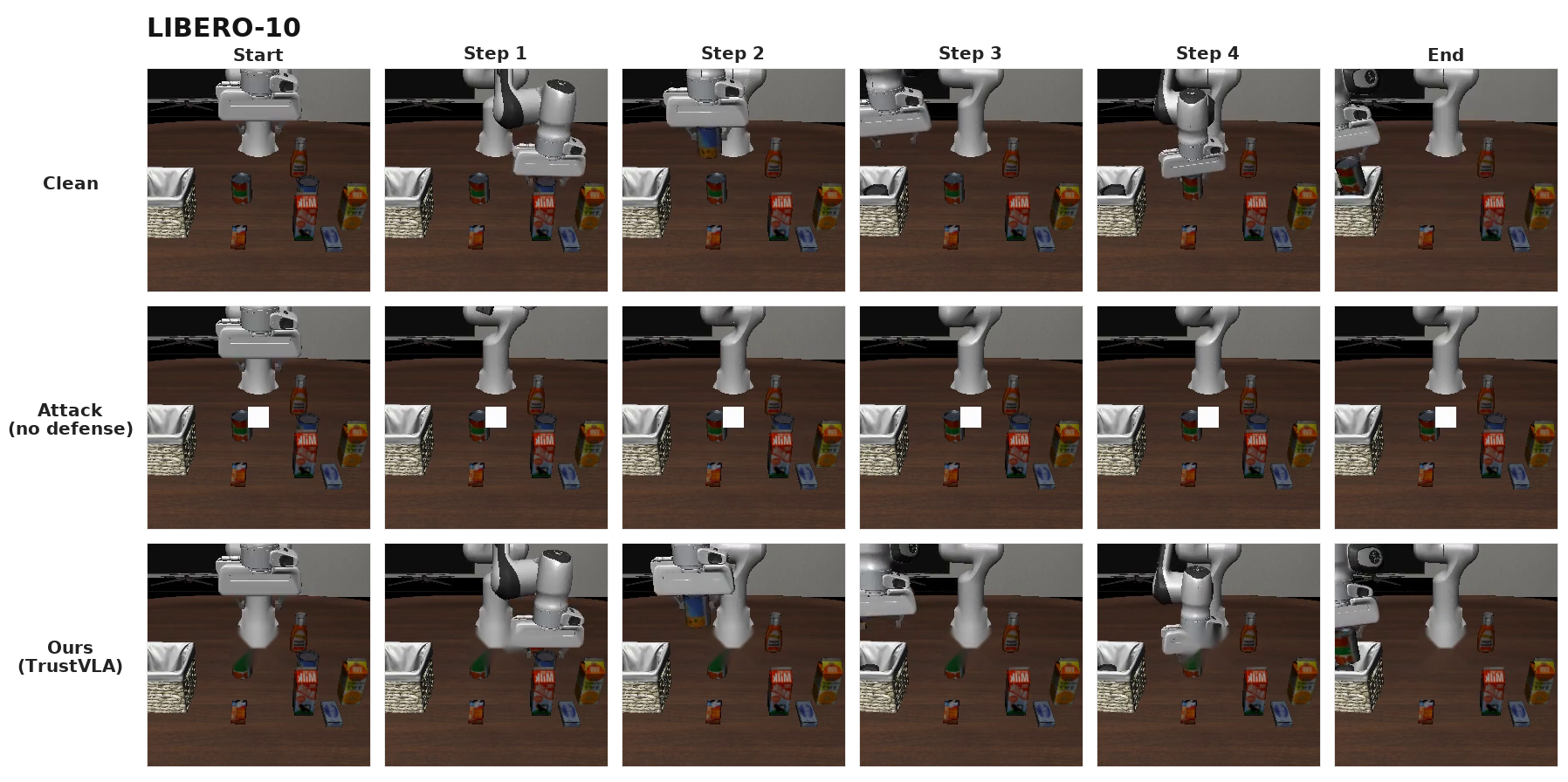}
\caption{\textbf{Qualitative LIBERO-10 rollout example.} The task is to place both the alphabet soup and tomato sauce in the basket. LIBERO-10 is the most cluttered evaluated suite, so this example highlights why the quantitative result is governed by localization and recovery rather than clean false alarms: the trigger is detected, the compact support is removed, and the policy resumes the intended multi-object manipulation.}
\label{fig:app_qual_libero10}
\end{figure*}

\end{document}